\renewcommand*\backref[1]{\ifx#1\relax \else (Cited on #1) \fi}
\DeclareMathOperator*{\argmax}{arg\,max}
\DeclareMathOperator*{\argmin}{arg\,min}
\DeclareMathOperator*{\Support}{Support}
\newcommand{\E}{\mathop{\mathbb{E}}}
\newcommand{\ones}{\mathbf{1}}
\newcommand{\X}{\mathcal{X}}
\newcommand{\Y}{\mathcal{Y}}
\newcommand{\I}{\mathcal{I}}
\newcommand{\D}{\mathcal{D}}
\newcommand{\winner}{\textrm{winner}}
\newcommand{\simplemax}{\textrm{SimpleMax}}
\newcommand{\mcube}{\I^m}
\newcommand{\vect}[1]{\boldsymbol{#1}}
\newcommand{\smalldots}{\hbox to 1em{.\hss.\hss.}}
\newcommand{\ltwo}{\ell^2}
\newcommand{\linf}{\ell^\infty}
\newtheorem{theorem}{Theorem}
\newtheorem{condition}{Condition}
\newtheorem{corollary}{Corollary}
\newtheorem{assumption}{Assumption}
\newtheorem{lemma}{Lemma}
\newtheorem{definition}{Definition}
\definecolor{red}{RGB}{220,20,60}
\definecolor{blue}{RGB}{70,130,180}
\definecolor{orange}{RGB}{255,165,0}
\title{Hedging and Approximate Truthfulness in Traditional Forecasting Competitions}
\author {
    Mary Monroe, Anish Thilagar, Melody Hsu, Rafael Frongillo
    \\
    University of Colorado Boulder
}
\date{}
\begin{document}

\maketitle

\begin{abstract}
In forecasting competitions, the traditional mechanism scores the predictions of each contestant against the outcome of each event, and the contestant with the highest total score wins.
While it is well-known that this traditional mechanism can suffer from incentive issues, it is folklore that contestants will still be roughly truthful as the number of events grows.
Yet thus far the literature lacks a formal analysis of this traditional mechanism.
This paper gives the first such analysis.
We first demonstrate that the ``long-run truthfulness'' folklore is false: even for arbitrary numbers of events, the best forecaster can have an incentive to hedge, reporting more moderate beliefs to increase their win probability.
On the positive side, however, we show that two contestants will be approximately truthful when they have sufficient uncertainty over the relative quality of their opponent and the outcomes of the events, a case which may arise in practice.
\end{abstract}

\section{Introduction}

In a forecasting competition, multiple forecasters submit predictions on multiple events or held-out data points.
Prominent examples include geopolitical forecasting tournaments like the Good Judgement Project and the Hybrid Forecasting Competition, and data science competitions such as those hosted on Kaggle.
The most popular way platforms select a winner is through the traditional mechanism we call Simple Max: after scoring each forecaster's predictions against the realized event outcomes, the forecaster with the highest total score wins.

While intuitive, it is well-established that Simple Max is not a truthful mechanism \citep{lichtendahl2007probability, witkowski2023incentive}.
To maximize their chance of winning the competition, forecasters can have incentives to submit predictions which differ substantially from their true beliefs.
Despite its incentive issues, Simple Max continues to enjoy widespread popularity.
The mechanism's pervasiveness is in part due to folklore that the advantages of untruthfulness are attenuated by the large number of events relative to forecasters.
For example, \citet{aldous2019prediction} states that ``in the long run it is best to be `honest.'" 
However, the community lacks any theoretical analysis backing up this claim.
Indeed, we lack \emph{any} strategic analysis of the Simple Max mechanism beyond a single event, a major omission given the ubiquity of this traditional mechanism.

In this paper, we give the first strategic analysis of Simple Max for multiple events, with a goal of understanding when this long-run-truthfulness folklore does and does not hold.
We first identify a natural example where folklore fails: when the best forecaster is much better than the rest.
Here forecasters will not be truthful in equilibrium, even for a large number of events.
In particular, it is strictly better for the best forecaster to \emph{hedge} her report towards the others' reports in order to reduce the variance of her scores.

We next study the two-forecaster case and identify a regime where contestants \emph{will} be approximately truthful: when they both believe they have a shot at winning but have sufficient uncertainty about the report of their opponent.
These conditions can be difficult to achieve if the number of events is large, as the scores of forecasters with fixed skill will separate and the worse forecaster will no longer have a shot at winning.
Therefore we expect our result to apply in settings where the number of events is not too large, such as geopolitical forecasting competitions, or in data science competitions with sufficient noise in the leaderboard.

Taken together, our results show that behavior under the Simple Max mechanism is not as simple as folklore claims. 
If platforms care about eliciting true beliefs from contestants, our counterexample shows that there are settings they need to avoid.
Meanwhile, our approximate truthfulness result shows that, at least in some regimes, forecasters will behave in accordance with their beliefs.
We conclude with several other implications and future work.

\subsection{Related work}
Non-truthful behavior in forecasting competitions under Simple Max is well-documented in previous work. \citet{lichtendahl2007probability} first showed that forecasters extremize their reports when they want to maximize the chance they win under Simple Max.
\citet{witkowski2018incentive, witkowski2023incentive} note that no matter the total number of events, strategic forecasters may still extremize on some subset of them.
These works demonstrate the existence of non-truthfulness, but fall short of directly challenging folklore, as the results make no claims about the significance of these deviations in the long run.
For example, even if forecasters extremize on a small subset of events, their average deviation from truthful may still be small.
By contrast, we extend previous results to demonstrate consistent non-truthful behavior in larger competitions, and also show regimes where approximate truthfulness still holds.

\section{Model}
In a forecasting competition, there are $m$ independent binary events $Y_1, \dots, Y_m \in \{0, 1\}$, each with bias $\theta_t = \E[Y_t]$.
Let $\I = [0,1]$ be the unit interval, and $\mcube$ be the $m$-dimensional unit cube. 
There are $n$ forecasters; each forecaster $i$ submits report $r_{it} \in \I$ for event $t$.
Let $\vect{r}_i \in \mcube$ be the vector of forecaster $i$'s reports across events. 
We define $\vect{r}_{i,-t}$ as the vector of forecaster $i$'s reports for all events except $t$.
Each event $Y_t$ is sampled to obtain the outcome vector $\vect{y} \in \{0, 1\}^m = \Y$.
Finally, the competition uses some mechanism to declare a singular winner using the set of all reports $(\vect{r}_1, \ldots, \vect{r}_n)$ and outcomes $\vect{y}$.
Forecasters may play pure strategies, always choosing the same report $\vect r_i$, or mixed strategies, where they sample $\vect r_i$ from some distribution $\sigma_i$.
\begin{definition}[Strategy]
    A strategy for player $i$ is a probability distribution $\sigma_i \in \Delta(\I^m)$ over the set of possible report vectors from which $\vect r_i$ is sampled.
    A strategy profile $\vect{\sigma} = (\sigma_1, \dots, \sigma_n)$ is the vector of strategies for all players.
\end{definition}

\subsection{The Simple Max mechanism}
\label{sec:simple-max}
The traditional forecasting competition mechanism, which we call \emph{Simple Max}, scores each forecaster on every event and selects the forecaster with the highest aggregate score, breaking ties uniformly.

\begin{definition}[Simple Max]
  Given reports $(\vect{r}_1,\ldots,\vect{r}_n) \in (\mcube)^n$ and an outcome vector $\vect{y}\in\Y$, define the \emph{winning set} to be the set
  \begin{equation}
    \label{eq:winning-set}
    \winner(\vect{r_1},\smalldots,\vect{r}_n,\vect{y}) = \argmax_{i\in\{1,\smalldots,n\}} \sum_{t=1}^m S(r_{it},y_t)~.
  \end{equation}
  The \emph{Simple Max} mechanism is the randomized mechanism $\simplemax(\vect r_1,\ldots,\vect r_n,\vect y) \in \Delta_n$ that chooses a forecaster $i$ as the winner with probability
  \begin{equation}
    \label{eq:simple-max}
    \simplemax(\vect{r}_1,\smalldots,\vect{r}_n,\vect{y})_i = \frac{\ones\{i \in \winner(\vect{r}_1,\smalldots,\vect{r}_n,\vect{y})\}}{|\winner(\vect{r}_1,\smalldots,\vect{r}_n,\vect{y})|}.
  \end{equation}
\end{definition}
When considering strategic players, we will define their utility to be exactly this win probability $U_i (\vect{r}_i; \vect{r}_{-i}, \vect{y}) = \simplemax (\vect{r}, \vect{y})_i$.
We will exclusively focus on the case of the quadratic score, $S(r,y) = 1-(r-y)^2$, as is standard in the literature.
Because of this choice of $S$, the winning set is equivalent to the set of reports closest in Euclidean distance to $\vect{y}$, i.e., $\winner(\vect{r}_1,\ldots,\vect{r}_n,\vect{y}) = \argmin_i \|\vect{r}_i - \vect{y}\|_2$.
We will use this geometric perspective often.

\subsection{Approximate truthfulness}
To measure the degree of truthfulness of the mechanism, we quantify how far the forecasters' reports deviate from their beliefs.
In particular, we define two notions of the \emph{approximate truthfulness} of agents' reports, based on $\ltwo$ and $\linf$ distances.%
\footnote{These definitions of approximate truthfulness, as in \citet{frongillo2021efficient}, are stronger than the typical one which requires that \emph{utility} is approximately optimized by truthful reporting.}
\begin{definition}[Approximately truthful]
    A report vector $\vect r_i$ for forecaster $i$ is \emph{$\gamma$-$\ltwo$ approximately truthful} if $\tfrac{1}{\sqrt m} \|\vect{r}_i - \vect{p}_i\|_2 \leq \gamma$.
    A strategy $\sigma_i$ is a $\gamma$-$\ltwo$ truthful strategy for forecaster $i$ when it is  $\gamma$-$\ltwo$ truthful with probability 1, i.e.,
    each of its elements is a $\gamma$-$\ltwo$ approximately truthful strategy.
    A report $\vect r_i$ is \emph{$\gamma$-$\linf$ approximately truthful} if $\| \vect{r}_i - \vect{p}_i \|_{\infty} \leq \gamma$.
\end{definition}
Note that $\linf$ is a much stronger notion of approximate truthfulness than $\ltwo$, as it tightly bounds the maximum deviation of a report on any single event, whereas the first notion only requires that the total deviation across all reports is within some $\ltwo$ ball.

\subsection{Bayesian forecasters}
We consider a Bayesian belief model where forecasters receive some information about each event via a signal.
Formally, each forecaster $i$ receives a signal $X_{it} \in \X_{it}$ for event $t$.
We denote all of forecaster $i$'s signals by $\vect{X}_i = (X_{i1}, \dots, X_{im})$, and the set of signals for event $t$ by $\vect{X}_t = (X_{1t}, \dots, X_{nt})$.
The $X_{it}$ are independent across all $i$ and $t$.
Furthermore, all forecasters share a common prior $\E[Y_t]$ of the outcome $Y_t$ and the set of all $X_{it}$. 
When they receive their signal $x_{it} \sim X_{it}$, each forecaster Bayesian updates to their posterior $p_{it}(X_{it}) = \E[Y_t | X_{it} = x_{it}]$.
We define the ground truth probability as $\theta_t(\vect{x}_t) = \mathbb{E}[Y_t | \vect{X}_t = \vect{x}_t]$, the posterior of a Bayesian who has seen all signals.
We suppress the arguments of $p_{it}$ and $\theta_t$ when they are clear.

\section{Hedging: a Counterexample to Folklore} \label{sec:counterexample}
The community has long understood the lack of incentive compatibility under Simple Max for small numbers of events. 
Moreover, winning forecasters in real competitions have repeatedly reported more extreme probabilities to increase their chance of winning \citep{kaggle2017march}.
Fundamentally, reporting a more extreme report increases the variance of one's score, which can increase the likelihood of winning despite a decrease in the average score (Figure~\ref{fig:extremizing-hedging-pdfs}).
Folklore, such as \citet{aldous2019prediction}, posit that such behavior is an artifact of only having a small number of events, and forecasters should converge to truthful reporting as the number of events increases. 
Many real world forecasting competitions use Simple Max under the assumption that with enough events the best forecasters will still be truthful.

We show that this is not generally the case.
In particular, while bad forecasters may extremize, even good forecasters who perfectly know $\vec \theta$ may intentionally report less extreme probabilities to increase their chance of winning.
This tradeoff is similar to that of the extremizing forecasters, except now they are \emph{decreasing} the variance of their final score at the cost of lowering their expected score.
When a forecaster's expected score is already far ahead of the competition, this increases their win probability. 
To the best of our knowledge, we are the first to observe that such deviations, which we call \emph{hedging}, can be beneficial to knowledgeable players.
In particular, we will show that when they are far ahead, Bayesian forecasters will report something bounded away from their true belief in equilibrium. 
Contrary to folklore, this bound grows as the number of events increases. 

\begin{figure}
    \centering
    \includegraphics[width=0.35\textwidth]{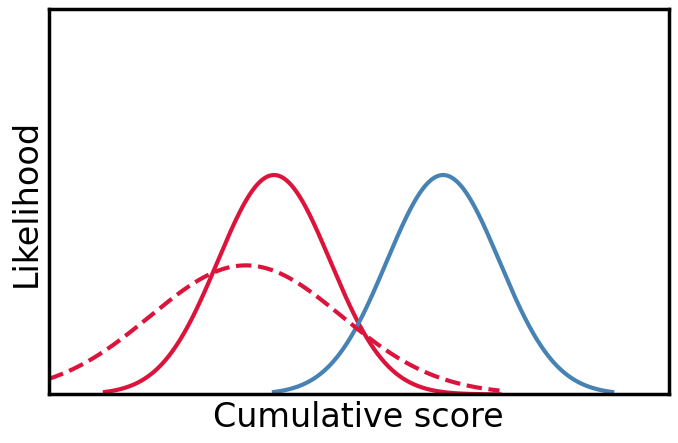}
    \includegraphics[width=0.35\textwidth]{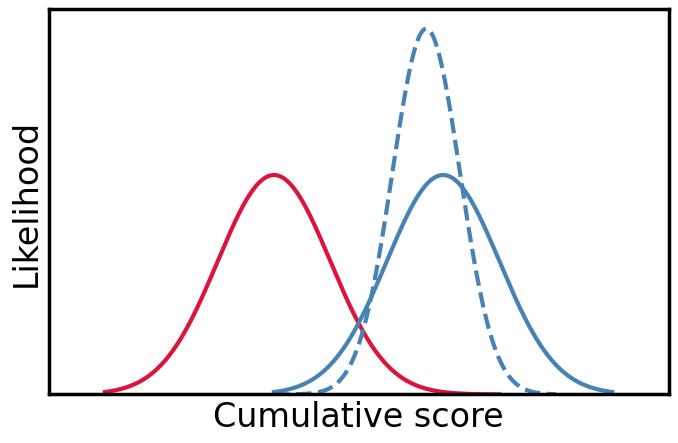}
    
    \caption{
        The score distributions of two (relatively) \textcolor{blue}{good} and \textcolor{red}{bad} forecasters.
        Their chance of winning is proportional to the region where their distributions overlap.
        On the left, the \textcolor{red}{bad} forecaster can extremize to the \textcolor{red}{dashed}: despite lowering her mean, this \emph{increases} her variance, and thus her win share.
        On the right, the \textcolor{blue}{good} forecaster can similarly increase her winshare by hedging to the \textcolor{blue}{dashed} distribution, which despite lowering her mean, also \emph{lowers} her variance.
        Intuitively, the reason the \textcolor{blue}{good} forecaster benefits hedging is that it decreases the variance of their score, ``locking in" their lead, even while decreasing their expected score.
        To draw a familiar analogy from sports: a team which is behind will start making long-shot attempts to score, increasing their variance, while the team which is ahead will try slow down the game, decreasing their variance as well as their chance of scoring more.
    }
    \label{fig:extremizing-hedging-pdfs}
\end{figure}

\subsection{Setting}
We use a specific instance of the Bayesian model with $p$-biased coins for some $p \in (0, 1/2)$.
Note that we can always observe the same behavior without the Bayesian assumption, but it is less clear that all players will play equilibrium strategies. %
Each event $Y_t$ is a random coin flip whose bias is $\theta_t \in \{p, 1-p\}$ chosen independently and uniformly at random.
The prior satisfies $\E [ Y_t ] = 1/2$ for all $t$.
We fix some $i$ to be the \emph{informed} forecaster, while all other forecasters $j \neq i$ remain uninformed.
Each such forecaster $j$ always receives \emph{no} signal for all events, so their belief remains $\vect p_{j} = \vect c := (1/2, \dots, 1/2)$, the center of the $m$-dimensional hypercube.
However, forecaster $i$ receives a signal that is exactly the true probabilities $\X_i = \vect \theta \in \{p, 1-p\}^m$.
This gap in information, and knowledge of it, is key to the hedging behavior we observe.

For mathematical convenience and without loss of generality, we invert the outcome of all events $t$ satisfying $\theta_t = 1-p$ and let $\vect \theta = \vect p = (p, \dots, p)$.
Any specific set of reports $r_{-i}$ for $j\neq i$ is then equivalent to the mixed strategy that randomly applies a reflection of the $m$-hypercube%
\footnote{Formally, any involution in the hyperoctahedral group $B_m$.}
to $r_{-i}$, since all such reflections are equally likely. 
We consider mixed strategy equilibria as pure strategy equilibria do not generally exist, even for simple cases like $m = n = 2$. 
We are particularly interested in when players play strategies that are close to their beliefs, which we measure using the notion of approximate truthfulness in Definition 2.

\subsection{Exact equilibria}
For sufficiently small choices of $m$ and $n$, we can exactly characterize the equilibrium.
We first start with an extremely simple case: that of a single event ($m=1)$.

We first consider just $n = 2$ forecasters.
Here, the equilibrium is for $i$ to report $r_i = 0$, and $j$ to maximally extremize and choose $r_j \in \{0, 1\}$ uniformly at random. 
This gives the forecasters win probabilities of $U_i = 3/4 - p/2$ and $U_j = 1/4 + p/2$ respectively.
If $i$ deviates to any report in $(0, 1]$ they can only win with probability $\leq 1/2 < 3/4 - p/2$, and if $j$ deviates to any report in $(0, 1)$ they can only win with probability $\leq p < 1/4 - p/2$.

As we increase the number of forecasters, the incentive to extremize only increases. 
$i$ will continue to play $r_i = 0$ and win with probability
\begin{align*}
    \tfrac{1}{2^{n-1}}\left[\sum_{k=0}^{n-1} \frac{ \binom{n-1}{k}}{k+1} + \frac{p}{n+1}\right]
    &= \frac{2}{n} - \tfrac{1}{2^{n-1}}\left[\frac{1}{n} - \frac{p}{n+1}\right] ~,
\end{align*}
while forecasters $j\neq i$ report a vertex at random and evenly split the remaining winshare.

When $m=2$, the equilibrium gets much more complex.
We describe it for $n = 2$ forecasters when $p \in (1/3, 1/2)$, as shown in Figure~\ref{fig:2d-equilibrium}.
In this case, $i$ will play $\vect r_i = \vect c = (1/2, 1/2)$, the center of the unit square, with probability $\frac{1-p}{2-p}$, and choose some $\vect r_i \in \{(0, 1/2), (1/2, 0)\}$ with probability $\frac{1}{2(2-p)}$ each.
Forecaster $j$ will play $\vect r_j = \vect c$ with probability $3 \cdot \frac{p}{2-p}$, and each of $\vect r_j \in \{1/2 \pm 1/4\}^2$ with probability $\frac{1/2-p}{2-p}$.
\begin{figure}[t]
    \centering
    \includegraphics[scale=1]{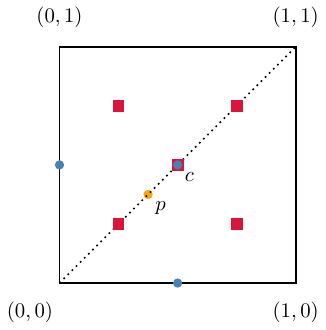}
    \caption{
        The equilibrium strategies of $i$ and $j$ for $m=n=2$ and $p \in (1/3, 1/2)$. 
        The {\textcolor{blue}{blue circles}} denote points $i$ plays and the {\textcolor{red}{red squares}} denote $j$'s.
        Notably both players play in the center $\vect c$ some of the time but $i$ always plays bounded away from their belief $\vect p$ ({\textcolor{orange}{orange point}}).
    }
    \label{fig:2d-equilibrium}
\end{figure}
While forecaster $j$ is extremizing with some non-zero probability, their average report is still truthful.
On the other hand, forecaster $i$ alternates between hedging to the center and extremizing to the midpoints.
Their average report is $\left(\frac{3-2p}{4(2-p)},\frac{3-2p}{4(2-p)}\right)$.
So when $p < \frac{5-\sqrt{13}}{4}$, $i$'s average report is hedged towards the center. 
Otherwise, their average report is slightly extremized.

\subsection{Hedging} \label{subsec:hedging}
As the number of events and players grow larger, it remains an open question to find an equilibrium even in this specific setting.
Our focus instead will be on determining whether an approximately truthful equilibrium can exist.
In Corollary~\ref{cor:no-truthful-equilibrium} we answer this question negatively: an equilibrium cannot be $\ltwo$-approximately truthful even in the limit of large $m$.
In particular, Theorem~\ref{thm:hedging-strict-dominance} shows that by hedging and reporting $\vect r^* = (p^*, \dots, p^*)$, where $p^* = \frac{1/2 + p}{2}$, forecaster $i$ can only increase their win probability. 
The Theorem holds even for relatively large values of $\epsilon$.
For example, when $p = 0.9$ and $m \geq 32$, deviating by $0.2$ in every coordinate is strictly better than \emph{any} $0.04$ $\ltwo$-approximately truthful strategy.

Our proof is geometric, leveraging the fact that a forecaster wins when their report is the closest to the outcome in Euclidean distance, as discussed in \S~\ref{sec:simple-max}.
Given any $\epsilon$-$\ltwo$ approximately truthful reports $\vect r$ for all forecasters, we show that hedging to $\vect r^*$ is a strictly dominant strategy for $i$.
Specifically, given any vertex $\vect y$, we show that either $\vect r^*$ is closer to $\vect y$ than any $\vect r_j$ for $j\neq i$, or $\vect r^*$ is closer to $\vect y$ than $\vect r_i$ is to $\vect y$.
See Figure~\ref{fig:hedging}.
The implication is that $\vect r^*$ wins on a strict superset of outcomes that $\vect r_i$ wins.

As shorthand, let us define the following distances, all from a report vector to some vertex $\vect y$:
$d_p(\vect y) = \|\vect p - \vect y\|_2$, 
$d^*(\vect y) = \|\vect r^* - \vect y\|_2$, 
$d_i(\vect y) = \|\vect r_i - \vect y\|_2$,
$d_{j}(\vect y) = \|\vect r_j - \vect y\|_2$.

\begin{figure}[t]
    \centering
    \includegraphics[width=0.6\linewidth, trim={0 0 4cm 0}, clip]{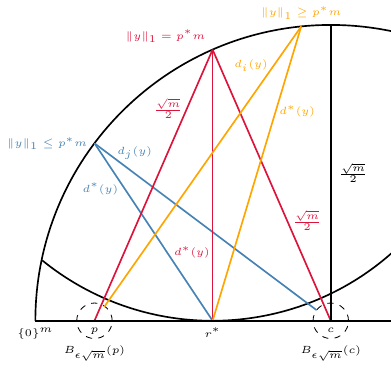}
    \caption{
        The geometry of the plane containing $\vect p$, $\vect c$, $\vect r^*$ and some $\vect y$. 
        The bottom line is the main diagonal of the $m$-dimensional hypercube that goes from $\{0\}^m$ to $\{1\}^m$.
        Note that $\vect r^*$ is the midpoint of $i$'s belief $\vect p$ and the belief of the other forecasters $\vect c$, so $i$ is moving to the middle of the ``information gap.''
        The radius of the balls around $\vect p$ and $\vect c$ are chosen so that they are bounded away from the ball of radius $d^*(\vect y)$ around a $\vect y$ that lies perpendicular to the diagonal at $\vect r^*$ ({\textcolor{red}{red}}).
        Lemma~\ref{lem:counterexample-lower-ineq} considers $\|\vect y\|_1 \leq p^* m$ ({\textcolor{blue}{blue}}), while Lemma~\ref{lem:counterexample-upper-ineq} considers $\|\vect y\|_1 \geq p^* m$ ({\textcolor{orange}{orange}}). 
        The high level idea is that forecaster $i$ can shift from winning roughly just the $\vect y$ to the left of $\vect r^*$ when reporting $\vect r_i$, to additionally winning some fraction of the $\vect y$ between $\vect r^*$ and $\vect c$ by hedging to $\vect r^*$.
    }
    \label{fig:hedging}
\end{figure}

To bound these distances away from each other, we choose $\epsilon$ such that the balls of radius $\epsilon \sqrt{m}$ are bounded away from the arc of radius $d^*(\vect y)$ about $\|\vect y\|_1 = p^*m$ (the arc about the {\textcolor{red}{red}} point in Figure~\ref{fig:hedging}).  
Therefore, we choose $m$ large enough such that there is enough space in the circle of radius $\sqrt{m}/2$ for the arc to lie above $\vect c$ and $\vect p$. 
Furthermore, to show strict dominance the arc must be at least distance 2 from the balls.
(Note that all the distances scale as $\sqrt{m}$, so the gap between the arc and the balls gets proportionally smaller for large $m$).
Then, we must choose $p$ such that there is enough space between the arc and the main diagonal for a gap of distance 2 to exist.
The following condition suffices for all of these relationships to hold.

\begin{condition} 
    \label{cond:hedging-bounds} ~

    \begin{itemize}
        \item $m \geq 21$ 
        \item $0 < p < \frac{1}{2} - 2\sqrt{\tfrac{2}{\sqrt{m}}(1 - \tfrac{2}{\sqrt{m}})}$
        \item $0 < \epsilon < \frac{1}{2} - \sqrt{p^* ( 1- p^*)} - \frac{2}{\sqrt{m}}$ ~.
    \end{itemize}
\end{condition}
Note that such an $\epsilon$ will always exist when the first two inequalities hold.
To show that $\vect r^*$ dominates $\vect r_i$ we consider two cases.
When $\vect y$ lies to the left of $\vect r^*$, $\vect r^*$ wins against each $\vect r_j$, as shown by the {\textcolor{blue}{blue}} lines in Figure~\ref{fig:hedging}, so $\vect r_i$ cannot strictly dominate $\vect r^*$.
\begin{lemma} \label{lem:counterexample-lower-ineq}
    Suppose $m$, $p$ and $\epsilon$ satisfy Condition~\ref{cond:hedging-bounds}.
    Then for any $\vect y \in \Y$ with $\|\vect y\|_1 \leq p^* m$, if $\|\vect r_j - \vect c\|_2 < \epsilon \sqrt{m}$ then $d^*(\vect y) < d_j(\vect y) - 2$.
\end{lemma}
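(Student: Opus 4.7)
The plan is to bound $d_j(\vect y)$ from below using the triangle inequality through $\vect c$, and bound $d^*(\vect y)$ from above by exploiting the constraint $\|\vect y\|_1 \leq p^*m$, then combine the two and verify the resulting inequality matches the third bullet of Condition~\ref{cond:hedging-bounds}.

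First I would apply the triangle inequality $d_j(\vect y) = \|\vect r_j - \vect y\|_2 \geq \|\vect c - \vect y\|_2 - \|\vect r_j - \vect c\|_2$. Since $\vect y \in \{0,1\}^m$ and $\vect c = (1/2,\dots,1/2)$, we have $(1/2 - y_t)^2 = 1/4$ for every $t$, so $\|\vect c - \vect y\|_2 = \sqrt m/2$. Combined with the hypothesis $\|\vect r_j - \vect c\|_2 < \epsilon \sqrt m$, this gives the clean lower bound $d_j(\vect y) > \sqrt m/2 - \epsilon\sqrt m$.

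Next I would upper-bound $d^*(\vect y)$. Writing $k = \|\vect y\|_1 \in \{0,1,\dots,m\}$ and expanding coordinate-wise, $d^*(\vect y)^2 = k(1-p^*)^2 + (m-k)(p^*)^2 = m(p^*)^2 + k(1-2p^*)$. Since $p < 1/2$ gives $p^* < 1/2$, the coefficient $1 - 2p^*$ is positive, so the expression is maximized at the largest allowed $k$, namely $k = p^*m$. Plugging in yields $d^*(\vect y)^2 \leq m(p^*)^2 + p^*m(1-2p^*) = mp^*(1-p^*)$, hence $d^*(\vect y) \leq \sqrt m\,\sqrt{p^*(1-p^*)}$.

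Finally I would combine the two estimates: it suffices to show $\sqrt m/2 - \epsilon\sqrt m > \sqrt m\,\sqrt{p^*(1-p^*)} + 2$, which rearranges to $\epsilon < \tfrac{1}{2} - \sqrt{p^*(1-p^*)} - \tfrac{2}{\sqrt m}$. This is exactly the third bullet of Condition~\ref{cond:hedging-bounds}, completing the proof. The argument is really just two triangle/coordinate computations glued together, so there is no serious obstacle; the only thing to be careful about is verifying that $p^* < 1/2$ so that the upper bound on $d^*(\vect y)^2$ is indeed attained at the extreme $k = p^*m$ rather than at $k = 0$, and that the resulting condition on $\epsilon$ matches the one stated rather than being strictly stronger.
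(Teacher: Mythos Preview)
Your proposal is correct and follows essentially the same route as the paper: triangle inequality through $\vect c$ to lower-bound $d_j(\vect y)$ by $\sqrt m(1/2-\epsilon)$, the coordinate expansion $d^*(\vect y)^2 = m(p^*)^2 + \|\vect y\|_1(1-2p^*)$ maximized at $\|\vect y\|_1 = p^*m$ to upper-bound $d^*(\vect y)$ by $\sqrt m\,\sqrt{p^*(1-p^*)}$, and then the third bullet of Condition~\ref{cond:hedging-bounds} to close the gap of $2$. The paper simply chains these steps in a single inequality string rather than presenting the two bounds separately.
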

When $\vect y$ lies to the right of $\vect r^*$, $\vect r^*$ dominates $\vect r_i$, as shown by the {\textcolor{orange}{orange}} lines in Figure~\ref{fig:hedging}.
\begin{lemma} \label{lem:counterexample-upper-ineq}
    Suppose $m$, $p$ and $\epsilon$ satisfy Condition~\ref{cond:hedging-bounds}.
    Then for any $\vect y \in \Y$ with $\|\vect y\|_1 \geq p^* m$, if $\|\vect r_i - \vect p \|_2 < \epsilon \sqrt{m}$ then $d^*(\vect y) < d_i(\vect y) - 2$.
\end{lemma}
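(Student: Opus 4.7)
The plan is to reduce the claim to a lower bound on $d_p(\vect y) - d^*(\vect y)$ and then establish that bound via the parallelogram law. Since $\|\vect r_i - \vect p\|_2 < \epsilon \sqrt{m}$, the triangle inequality gives $d_i(\vect y) > d_p(\vect y) - \epsilon\sqrt{m}$, so by the third bullet of Condition~\ref{cond:hedging-bounds} it suffices to prove
\[d_p(\vect y) - d^*(\vect y) \;\geq\; \sqrt{m}\bigl(\tfrac{1}{2} - \sqrt{p^*(1-p^*)}\bigr).\]
Because $\vect r^* = (\vect p + \vect c)/2$ is the midpoint of $\vect p$ and $\vect c$, applying the parallelogram identity to $\vect p, \vect c, \vect y$ yields $d_p(\vect y)^2 + \|\vect c - \vect y\|_2^2 = 2 d^*(\vect y)^2 + \tfrac{1}{2}\|\vect p - \vect c\|_2^2$. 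Using the identities $\|\vect c - \vect y\|_2^2 = m/4$ (true for any vertex) and $\|\vect p - \vect c\|_2^2 = m(1/2-p)^2$, this rearranges to the \emph{exact} relation $d^*(\vect y)^2 = \tfrac{1}{2} d_p(\vect y)^2 + C$, where $C := m/8 - m(1/2-p)^2/4$ is strictly positive since $(1/2-p)^2 < 1/2$. A quick algebraic check using $p^* = (1/2+p)/2$ moreover shows $m/8 + C = m\, p^*(1-p^*)$.

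Consequently the gap $d_p - d^*$ equals $f(d_p(\vect y))$ for the explicit function $f(u) := u - \sqrt{u^2/2 + C}$ of the single variable $u$, and since $f'(u) = 1 - u/(2\sqrt{u^2/2 + C}) > 0$ whenever $C > 0$, $f$ is strictly increasing. It then remains to lower bound $d_p(\vect y)$ over vertices with $\|\vect y\|_1 \geq p^* m$. Writing $s = \|\vect y\|_1$, direct expansion gives $d_p(\vect y)^2 = m p^2 + s(1-2p)$, which is linear and increasing in $s$ (since $p < 1/2$), and at $s = p^* m$ simplifies to $m/4$; hence $d_p(\vect y) \geq \sqrt{m}/2$. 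Plugging into $f$ yields $f(\sqrt{m}/2) = \sqrt{m}/2 - \sqrt{m/8 + C} = \sqrt{m}\bigl(\tfrac{1}{2} - \sqrt{p^*(1-p^*)}\bigr)$, which is precisely the bound required.

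The main subtlety is the monotonicity reduction. A direct attack via $d_p - d^* = (d_p^2 - d^{*2})/(d_p + d^*)$ combined with a crude upper bound $d_p + d^* \leq 2\sqrt{m}$ on the denominator loses a constant factor and falls short of the target $\sqrt{m}(1/2 - \sqrt{p^*(1-p^*)})$ once $\epsilon$ approaches the threshold in Condition~\ref{cond:hedging-bounds}. The parallelogram identity is what makes the bound tight: it collapses the two-variable gap into a monotone function of the single variable $d_p$, so the minimum over valid vertices is cleanly attained (in the continuous relaxation over $s \in [p^* m, m]$) at the boundary $s = p^* m$.
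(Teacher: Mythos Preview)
Your proof is correct and follows the same overall architecture as the paper's: reduce from $d_i$ to $d_p$ via the triangle inequality, verify the target inequality $d_p(\vect y)-d^*(\vect y)>\epsilon\sqrt{m}+2$ at the boundary $\|\vect y\|_1=p^*m$ (where $d_p=\sqrt{m}/2$ and $d^*=\sqrt{m\,p^*(1-p^*)}$), and then propagate it to all $\|\vect y\|_1\ge p^*m$ by a monotonicity argument. The one genuine difference is how the monotonicity is obtained. The paper treats $d_p$ and $d^*$ separately as explicit functions of $s=\|\vect y\|_1$ and differentiates their difference, reducing the sign of $f'$ to the inequality $4\,d^*(\vect y)^2>d_p(\vect y)^2$. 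You instead exploit that $\vect r^*$ is the midpoint of $\vect p$ and $\vect c$: the parallelogram identity collapses the two distances into the exact relation $d^{*2}=\tfrac{1}{2}d_p^2+C$ with $C>0$, so $d_p-d^*$ becomes a function of the single variable $d_p$, whose positivity of derivative is immediate. Your monotonicity condition $u^2+4C>0$ is in fact the same inequality $4d^{*2}>d_p^2$ the paper checks, just arrived at more cleanly; the parallelogram route avoids the coordinate calculus on $s$ and makes the geometry (midpoint structure) explicit, at the cost of invoking an extra identity.
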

Both proofs are deferred to \S~\ref{appendix:counterexample}.
Jointly, they show that when all players report $\epsilon$-$\ltwo$ truthfully, $\vect r^*$ dominates $\vect r_i$. 
\begin{lemma} \label{lem:counterexample-dominance}
    Suppose $m$, $p$ and $\epsilon$ satisfy Condition~\ref{cond:hedging-bounds} and each forecaster chooses an $\epsilon$-$\ltwo$ approximately truthful report.
    Then for every $\vect y \in \Y$, $U_i(\vect r_i, \vect r_{-i}, \vect y) \leq U_i(\vect r^*, \vect r_{-i}, \vect y)$.
\end{lemma}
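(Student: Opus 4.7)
My proof plan is to perform a pointwise case analysis on the outcome $\vect y \in \Y$, directly packaging the geometric bounds of Lemmas~\ref{lem:counterexample-lower-ineq} and~\ref{lem:counterexample-upper-ineq} into a utility inequality. Since the winning set under the quadratic score is the Euclidean-nearest-report set (\S~\ref{sec:simple-max}), and $U_i$ equals $1/|\winner|$ when $i$ is a winner and $0$ otherwise, it is enough to compare $d^*(\vect y)$ to $d_i(\vect y)$ and to the $d_j(\vect y)$ for $j \neq i$.

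Fix $\vect y \in \Y$. The cases $\|\vect y\|_1 \leq p^* m$ and $\|\vect y\|_1 \geq p^* m$ together cover $\Y$. In the first case, every opponent $j \neq i$ has belief $\vect p_j = \vect c$, so by $\epsilon$-$\ltwo$ approximate truthfulness $\|\vect r_j - \vect c\|_2 \leq \epsilon\sqrt m$, and Lemma~\ref{lem:counterexample-lower-ineq} yields $d^*(\vect y) < d_j(\vect y) - 2 < d_j(\vect y)$ for each such $j$. Thus $\vect r^*$ is strictly the closest report to $\vect y$, making $i$ the sole winner when reporting $\vect r^*$, and so $U_i(\vect r^*, \vect r_{-i}, \vect y) = 1 \geq U_i(\vect r_i, \vect r_{-i}, \vect y)$.

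In the second case, $\|\vect r_i - \vect p\|_2 \leq \epsilon\sqrt m$ and Lemma~\ref{lem:counterexample-upper-ineq} give $d^*(\vect y) < d_i(\vect y)$. I then subsplit on whether some opponent $j$ satisfies $d_j(\vect y) \leq d^*(\vect y)$. If such a $j$ exists, then $d_j(\vect y) < d_i(\vect y)$ too, so $j$ strictly beats $\vect r_i$ and $U_i(\vect r_i, \vect r_{-i}, \vect y) = 0$, making the inequality trivial. Otherwise, no opponent lies within $d^*(\vect y)$ of $\vect y$, so $\vect r^*$ is strictly closest and $U_i(\vect r^*, \vect r_{-i}, \vect y) = 1$. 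Either way, the desired bound holds.

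The only delicate point---which I expect to be the main, if minor, obstacle---is the strict hypothesis $\|\cdot\|_2 < \epsilon\sqrt m$ in the two preceding lemmas versus the weak inequality $\|\vect r - \vect p\|_2 \leq \epsilon\sqrt m$ coming from the definition of approximate truthfulness. I would sidestep this by invoking each lemma with a slightly enlarged $\epsilon' > \epsilon$; because Condition~\ref{cond:hedging-bounds} bounds $\epsilon$ strictly below a fixed threshold, such an $\epsilon'$ still satisfies the condition, and the conclusions of the lemmas do not depend on $\epsilon$. Aside from this bookkeeping no real difficulty remains: the geometric work is entirely handled by Lemmas~\ref{lem:counterexample-lower-ineq} and~\ref{lem:counterexample-upper-ineq}, and the present lemma is essentially an exercise in turning nearest-neighbor distance bounds into win-probability bounds.
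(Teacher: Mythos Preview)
Your proposal is correct and follows essentially the same two-case argument as the paper: in the first case you conclude $U_i(\vect r^*,\vect r_{-i},\vect y)=1$ via Lemma~\ref{lem:counterexample-lower-ineq}, and in the second you use Lemma~\ref{lem:counterexample-upper-ineq} to get $d^*(\vect y)<d_i(\vect y)$, which the paper summarizes in one monotonicity step while you spell out the two subcases. Your $\epsilon'$ workaround for the strict-versus-weak hypothesis is sound and in fact patches a detail the paper's own proof glosses over.
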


\begin{proof}
    If $\|\vect y\|_1 \leq p^* m$, then by Lemma~\ref{lem:counterexample-lower-ineq} for every $j \neq i$ we obtain $d^*(\vect y) < d_j(\vect y)$.
    Therefore, $\vect y$ is closer in Euclidean distance to $\vect r^*$ than $\vect r_j$, so $i$ is the winner with probability 1 when playing $\vect r^*$; thus $U_i(\vect r^*, \vect r_{-i}, \vect y) = 1 \geq U_i(\vect r_i, \vect r_{-i}, \vect y)$.

    Otherwise, if $\|\vect y\|_1 \geq p^*m$, then by Lemma~\ref{lem:counterexample-upper-ineq} $d^*(\vect y) < d_i(\vect y)$.
    Therefore, $\vect y$ is closer in Euclidean distance to $\vect r^*$ than $\vect r_i$, so $U_i(\vect r^*, \vect r_{-i}, \vect y) \geq U_i(\vect r_i, \vect r_{-i}, \vect y)$.
\end{proof}

Furthermore, because of the strict gap of distance 2 between the balls, there must be some $\vect y$ where $\vect r^*$ strictly dominates $\vect r_i$.
\begin{lemma}\label{lem:counterexample-strict-dominance}
    Suppose $m$, $p$ and $\epsilon$ satisfy Condition~\ref{cond:hedging-bounds} and each forecaster chooses an $\epsilon$-$\ltwo$ truthful report.
    Then there exists some $\vect y \in \Y$ such that 
    $U_i(\vect r_i, \vect r_{-i}, \vect y) < U_i(\vect r^*, \vect r_{-i}, \vect y)$.
\end{lemma}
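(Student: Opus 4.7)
The plan is to produce a single outcome vertex $\vect{y}^* \in \Y$ at which $\vect{r}^*$ strictly improves on $\vect{r}_i$, in the sense that $\vect{r}^*$ is the sole winner while $\vect{r}_i$ is not in the winner set at all. Since Lemma~\ref{lem:counterexample-dominance} already gives $U_i(\vect{r}^*, \vect{r}_{-i}, \vect{y}) \geq U_i(\vect{r}_i, \vect{r}_{-i}, \vect{y})$ for every $\vect{y}$, exhibiting one such $\vect{y}^*$ yields $U_i(\vect{r}^*, \vect{r}_{-i}, \vect{y}^*) = 1 > 0 = U_i(\vect{r}_i, \vect{r}_{-i}, \vect{y}^*)$, which is more than enough to conclude.

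I would take $\vect{y}^*$ to be any hypercube vertex with $\|\vect{y}^*\|_1 = k^*$ for a suitably chosen integer $k^* \in (p^*m,\, p^{**}m)$, where $p^{**} := (1/2 + p^*)/2$. The geometric motivation is that $d_c(\vect{y}) = \sqrt{m}/2$ is constant in $\vect{y}$ (since $\vect{c}$ is the cube's center), and direct expansion yields
\[
d_p(\vect{y})^2 - d_c(\vect{y})^2 = (1-2p)\bigl(\|\vect{y}\|_1 - p^*m\bigr), \qquad d^*(\vect{y})^2 - d_c(\vect{y})^2 = (1-2p^*)\bigl(\|\vect{y}\|_1 - p^{**}m\bigr).
\]
Thus in the unperturbed case $\vect{r}_i = \vect{p}$, $\vect{r}_j = \vect{c}$, any $k^* \in (p^*m, p^{**}m)$ produces the strict ordering $d^*(\vect{y}^*) < d_c(\vect{y}^*) < d_p(\vect{y}^*)$, so $\vect{r}^*$ is the sole winner and $\vect{r}_i$ is strictly beaten by $\vect{r}_j$.

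To make this robust under the $\epsilon$-$\ltwo$ perturbation of each forecaster's report from their belief, I would apply the triangle-inequality bounds $|d_i(\vect{y}^*) - d_p(\vect{y}^*)| \leq \epsilon\sqrt{m}$ and $|d_j(\vect{y}^*) - d_c(\vect{y}^*)| \leq \epsilon\sqrt{m}$. Since $d_p(\vect{y}^*) - d_c(\vect{y}^*)$ scales like $(k^* - p^*m)/\sqrt{m}$ and $d_c(\vect{y}^*) - d^*(\vect{y}^*)$ like $(p^{**}m - k^*)/\sqrt{m}$, choosing $k^*$ in the shrunken sub-interval $(p^*m + \Theta(\epsilon m),\, p^{**}m - \Theta(\epsilon m))$ guarantees both gaps strictly exceed $2\epsilon\sqrt{m}$, so the ordering $d^*(\vect{y}^*) < d_j(\vect{y}^*) < d_i(\vect{y}^*)$ persists for every $j$ under arbitrary $\epsilon$-truthful perturbations.

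The main obstacle is verifying that this shrunken sub-interval contains an integer, i.e., that $(p^{**} - p^*)m = (1-2p^*)m/4$ dominates the combined perturbation slack. This reduces to the algebraic content of Condition~\ref{cond:hedging-bounds}'s upper bound on $\epsilon$ (in terms of $p^*$ and $m$); if the direct triangle-inequality slack turns out too loose, the $-2$ margins already supplied by Lemmas~\ref{lem:counterexample-lower-ineq} and~\ref{lem:counterexample-upper-ineq} can be traded in to give extra room in the two chain inequalities, absorbing an additional $\Theta(1)$ of error budget and widening the feasibility window for $k^*$.
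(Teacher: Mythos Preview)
Your plan is genuinely different from the paper's, and the constants do not close under Condition~\ref{cond:hedging-bounds} as written. The paper never fixes a single weight level $k^*$. Instead it argues by cases on $\Y_i=\{\vect y: U_i(\vect r_i;\vect r_{-i},\vect y)=1\}$: if some vertex with $\|\vect y\|_1<\lceil p^*m\rceil$ already lies outside $\Y_i$, Lemma~\ref{lem:counterexample-lower-ineq} places it in $\Y^*$; otherwise it takes a maximal-norm $\vect y\in\Y_i$, steps to a Hamming neighbour $\vect y^+\notin\Y_i$, and chains $d_j(\vect y^+)\ge d_j(\vect y)-1> d_i(\vect y)-1\ge d_i(\vect y^+)-2> d^*(\vect y^+)$ via Lemma~\ref{lem:counterexample-upper-ineq}. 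The crucial input in that second case is $d_j(\vect y)>d_i(\vect y)$, which is free because $\vect y\in\Y_i$; the paper never needs to control $d_j$ through $d_c$.

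Your direct scheme, by contrast, must force both $d_p-d_c>2\epsilon\sqrt m$ and $d_c-d^*>\epsilon\sqrt m$ at one vertex. Writing $q=\tfrac12-p$, your own identities give $d_p^2-d_c^2=2q(k-p^*m)$ and $d_c^2-(d^*)^2=q(p^{**}m-k)$, and since $d_p+d_c$ and $d_c+d^*$ are each at least $\sqrt m/2$ and close to $\sqrt m$ in this range, the two requirements become $k^*-p^*m\gtrsim \epsilon m/q$ and $p^{**}m-k^*\gtrsim \epsilon m/q$. Together they ask the window $(p^*m,p^{**}m)$, of width $qm/4$, to absorb roughly $2\epsilon m/q$ of slack, i.e.\ $q^2\gtrsim 8\epsilon$. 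But Condition~\ref{cond:hedging-bounds} only gives $\epsilon<\tfrac12-\sqrt{p^*(1-p^*)}-2/\sqrt m=\tfrac{q^2}{2(1+\sqrt{1-q^2})}-2/\sqrt m$, which for small $q$ and large $m$ is essentially $q^2/4$; so for any $\epsilon$ in roughly $(q^2/8,\,q^2/4)$ your shrunken interval is empty and no admissible $k^*$ exists. The proposed fallback does not repair this: Lemma~\ref{lem:counterexample-lower-ineq} applies only when $\|\vect y\|_1\le p^*m$, outside your chosen range, and Lemma~\ref{lem:counterexample-upper-ineq}'s margin $d_i>d^*+2$ is the \emph{combined} inequality, not either of the two you need separately; moreover the deficit is $\Theta(\epsilon\sqrt m)$, not $\Theta(1)$, so a constant buffer cannot close it. Your construction would go through if you strengthened the $\epsilon$-bound by a factor of two, but under Condition~\ref{cond:hedging-bounds} as stated you need the paper's adaptive boundary argument (or something equivalent) rather than a universal $k^*$.
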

\begin{proof}
    Let $\Y_i = \{\vect y \mid U_i(\vect r_i; \vect r_{-i}, \vect y) = 1\}$ and $\Y^* = \{\vect y \mid U_i(\vect r^*; \vect r_{-i}, \vect y) = 1\}$.
    It suffices to show that $\Y^* \setminus Y_i \neq \emptyset$.
    
    Suppose there is some $\vect y \not \in \Y_i$ such that $\|\vect y\|_1 < \lceil p^* m \rceil$.
    Then, by Lemma~\ref{lem:counterexample-lower-ineq} $d^*(\vect y) < d_j(\vect y)$ for all $j \neq i$, so $\vect y \in \Y^*$ and $\Y^* \setminus Y_i \neq \emptyset$ so we are done.
    
    Otherwise, if no such $\vect y$ exists, we pick $\vect y \in \argmax_{\vect y' \in \Y_i} \|\vect y'\|_1$.
    We must have $\|\vect y\|_1 \geq \lceil p^* m \rceil$.
    Now, choose any $\vect y^+ \not \in Y_i$ such that $\|\vect y^+\|_1 = \|\vect y\|_1 + 1$ and $\|\vect y^+ - \vect y\|_1 = 1$. 
    (i.e. choose $\vect y^+$ by taking $\vect y$ and flipping some 0 to a 1).
    Such a $\vect y^+$ must exist since $\vect y$ was the $\argmax$ and $\{1\}^m \not \in Y_i$.

    Since $\|\vect y^+ - \vect y\|_1 = 1$, we must have $d_i(\vect y^+) - 1 \leq d_i(\vect y)$ and similarly $d_j(\vect y) - 1\leq d_j(\vect y^+)$. 
    But since $\vect y \in \Y_i$, $d_i(\vect y) > d_j(\vect y)$.
    Combining, we obtain 
    \begin{align*}
        d_j(\vect y^+) 
        &\geq d_j(\vect y) - 1 > d_i(\vect y) - 1 \geq d_i(\vect y^+) - 2 > d^*(\vect y) ~,
    \end{align*}
    by Lemma~\ref{lem:counterexample-upper-ineq}.
    Therefore, $\vect y^+ \in \Y^*$, so $\Y^* \setminus Y_i \neq \emptyset$.
\end{proof}

Taken together, the previous results imply that hedging to $\vect r^*$ strictly dominates $\vect r_i$ when any $\epsilon$-$\ltwo$ approximately truthful strategy is played.
\begin{theorem}\label{thm:hedging-strict-dominance}
    Suppose $m$, $p$ and $\epsilon$ satisfy Condition~\ref{cond:hedging-bounds}.
    Let $\vect r_i$ be any $\epsilon$-$\ltwo$ approximately truthful report for forecaster $i$ and  $\sigma_{-i}$ be an $\epsilon$-$\ltwo$ approximately truthful strategy profile for all $j \neq i$.
    Then $\vect r_i$ is strictly dominated by hedging and reporting $\vect r^*$.
\end{theorem}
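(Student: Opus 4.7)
The plan is to lift the pointwise dominance results established in Lemma~\ref{lem:counterexample-dominance} and the pointwise strict dominance result in Lemma~\ref{lem:counterexample-strict-dominance} to strict dominance in expected utility, the usual game-theoretic notion. All of the geometric work has already been carried out, so the theorem should follow essentially by bookkeeping over the relevant expectations.

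First I would fix an arbitrary realization $\vect r_{-i}$ in the support of $\sigma_{-i}$. Since $\sigma_{-i}$ is $\epsilon$-$\ltwo$ approximately truthful, each such realization satisfies $\|\vect r_j - \vect c\|_2 \leq \epsilon\sqrt{m}$ for every $j \neq i$, since the uninformed forecasters' beliefs are $\vect c$. Combined with the hypothesis that $\vect r_i$ itself is $\epsilon$-$\ltwo$ approximately truthful (so $\|\vect r_i - \vect p\|_2 \leq \epsilon \sqrt m$), the hypotheses of both Lemma~\ref{lem:counterexample-dominance} and Lemma~\ref{lem:counterexample-strict-dominance} are met. The first lemma gives $U_i(\vect r_i;\vect r_{-i},\vect y) \leq U_i(\vect r^*;\vect r_{-i},\vect y)$ for every $\vect y \in \Y$, while the second provides a specific vertex $\vect y^\star$ (depending on $\vect r_{-i}$) for which the inequality is strict.

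Next I would take expectation over $\vect y$ sampled from the prior. After the WLOG reflection, outcomes are i.i.d.\ $\mathrm{Bern}(p)$ with $p \in (0,1/2)$, so every $\vect y \in \Y$ receives strictly positive probability $p^{\|\vect y\|_1}(1-p)^{m-\|\vect y\|_1} > 0$. The pointwise weak inequality plus strict inequality at $\vect y^\star$ therefore integrate to a strict inequality in expectation:
$$
\E_{\vect y}\!\left[U_i(\vect r_i;\vect r_{-i},\vect y)\right] \;<\; \E_{\vect y}\!\left[U_i(\vect r^*;\vect r_{-i},\vect y)\right].
$$
Since this strict inequality holds for every realization $\vect r_{-i}$ in the support of $\sigma_{-i}$, averaging once more over $\vect r_{-i}\sim\sigma_{-i}$ preserves strict inequality, yielding strict dominance of $\vect r^*$ over $\vect r_i$.

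Since the lemmas do all the hard geometric work, there is essentially no remaining obstacle; the only subtlety is verifying that the approximate-truthfulness hypothesis of the theorem correctly propagates to the pointwise hypotheses of the two lemmas for every realization in the support of the opponents' mixed strategy, and that positivity of the outcome prior lets the strict inequality survive expectation.
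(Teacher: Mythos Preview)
Your proposal is correct and follows essentially the same route as the paper's proof: invoke Lemma~\ref{lem:counterexample-dominance} for pointwise weak dominance, Lemma~\ref{lem:counterexample-strict-dominance} for a strict witness, integrate over $\vect y$, then over $\vect r_{-i}\sim\sigma_{-i}$. If anything, you are slightly more careful than the paper in explicitly noting that every vertex has strictly positive probability under the Bernoulli prior, which is what lets the strict inequality survive the expectation over $\vect y$.
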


\begin{proof}
    By Lemma~\ref{lem:counterexample-dominance}, for any $\vect r_{-i} \in \Support(\vect \sigma_{-i})$, and every $\vect y\in \Y$, $U_i(\vect r_i; \vect r_{-i}, \vect y) \leq U_i(\vect r^*; \vect r_{-i}, \vect y)$.
    Additionally, by Lemma~\ref{lem:counterexample-strict-dominance}, for at least one $\vect y$ the inequality is strict.
    Therefore, 
    \begin{align*}
        \E_{\vect y \sim \vect \theta} \left[U_i(\vect r_i; \vect r_{-i}, \vect y) \right]
        &< \E_{\vect y \sim \vect \theta} \left[U_i(\vect r^*; \vect r_{-i}, \vect y) \right] ~.
    \end{align*}

    Taking the expectation of both sides over all $\vect r_i \sim \sigma_i$ shows that forecaster $i$'s win probability when reporting $\vect r_i$ is strictly less than when they hedge and report $\vect r^*$. 
    
\end{proof}

In particular, Condition~\ref{cond:hedging-bounds} implies $p^* > p + \epsilon$, so $i$ will always strictly benefit by hedging to $\vect r^*$ from any $\epsilon$-$\ltwo$ approximately truthful strategy. 
Therefore, no $\epsilon$-$\ltwo$ approximately truthful equilibrium exists.
\begin{corollary} \label{cor:no-truthful-equilibrium}
    Suppose $m$, $p$ and $\epsilon$ satisfy Condition~\ref{cond:hedging-bounds}.
    Then, every equilibrium strategy profile $\vect{\sigma}$ is not $\epsilon$-$\ltwo$ approximately truthful.
\end{corollary}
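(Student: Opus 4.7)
The plan is to prove Corollary~\ref{cor:no-truthful-equilibrium} by contradiction, using Theorem~\ref{thm:hedging-strict-dominance} as a black box. The theorem already asserts that, whenever all forecasters play $\epsilon$-$\ltwo$ truthful strategies, hedging to $\vect r^*$ strictly dominates any $\epsilon$-$\ltwo$ truthful report for player $i$. The remaining work is simply to observe that such strict dominance rules out an $\epsilon$-$\ltwo$ truthful equilibrium: in any such putative equilibrium, forecaster $i$ would always have a profitable pure-strategy deviation to $\vect r^* \in \I^m$.

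\textbf{Key steps.} First, I would assume for contradiction that $\vect\sigma = (\sigma_1, \dots, \sigma_n)$ is an equilibrium that is $\epsilon$-$\ltwo$ approximately truthful. By the definition of a truthful strategy, every report $\vect r_i \in \Support(\sigma_i)$ is then $\epsilon$-$\ltwo$ truthful for forecaster $i$, and $\sigma_{-i}$ is an $\epsilon$-$\ltwo$ truthful strategy profile for the opponents, so the hypotheses of Theorem~\ref{thm:hedging-strict-dominance} are met. Second, applying that theorem to each $\vect r_i \in \Support(\sigma_i)$ yields, after taking expectation over $\vect r_{-i} \sim \sigma_{-i}$ and $\vect y \sim \vect\theta$, that the pure deviation $\vect r^*$ gives strictly higher expected utility against $\sigma_{-i}$ than $\vect r_i$ does. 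Third, I would integrate this strict pointwise gap against $\sigma_i$ to conclude that $\vect r^*$ strictly beats $\sigma_i$ as a response to $\sigma_{-i}$. This exhibits a profitable pure-strategy deviation for $i$, contradicting the equilibrium condition that $\sigma_i$ be a best response to $\sigma_{-i}$; no $\epsilon$-$\ltwo$ truthful equilibrium can therefore exist.

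\textbf{Main obstacle.} This argument is essentially a one-line reduction, so I do not anticipate a real obstacle. The only point needing care is the passage from a pointwise strict inequality over $\Support(\sigma_i)$ to a strict inequality in expectation under $\sigma_i$, which is routine because the gap guaranteed by Theorem~\ref{thm:hedging-strict-dominance} is strictly positive on the whole support. For bookkeeping, I would also note the remark already appearing in the text: Condition~\ref{cond:hedging-bounds} forces $p^* > p + \epsilon$, so $\vect r^*$ itself is not $\epsilon$-$\ltwo$ truthful and hence constitutes a genuine deviation outside the class of strategies to which the candidate equilibrium is restricted.
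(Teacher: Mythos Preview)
Your proposal is correct and matches the paper's own argument: the corollary is stated immediately after Theorem~\ref{thm:hedging-strict-dominance} with only the one-sentence justification that, since $p^* > p + \epsilon$ under Condition~\ref{cond:hedging-bounds}, forecaster $i$ strictly benefits from deviating to $\vect r^*$ out of any $\epsilon$-$\ltwo$ truthful profile, so no such equilibrium can exist. Your more explicit spelling-out of the contradiction and the integration over $\sigma_i$ is exactly the routine unpacking of that sentence.
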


\section{Achieving Approximate Truthfulness} \label{sec:approx-truthfulness}
Though we have shown regimes where forecasters deviate from their beliefs, we now prove Simple Max is still approximately truthful in settings which may arise in practice. 
These results do not identify specific equilibria, and instead place conditions on the beliefs about the reports of their opponents.
Our results are practical in actual competitions, in the sense that forecasters do form beliefs about the reports of their opponents (either by reasoning about equilibrium behavior or looking at historical data~\cite{kaggle2017march}) and those beliefs could plausibly satisfy these conditions.

Our main result, Theorem~\ref{thm:2-forecaster-truthful}, considers two forecasters $i$ and $j$, and shows that both will report approximately truthfully when they have sufficient uncertainty over each others' reports.
Specifically, we require that (1) the expected score difference is smooth, (2) any report vector induces enough variance in the score, and (3) forecaster $i$ does not think her score will be much larger than $j$'s, or vice versa.
In real-world competitions, all 3 conditions are quite reasonable.
We use this uncertainty to show that each player's utilities are approximately affine in their score.
Therefore, to maximize their utility, they will roughly aim to maximize their score.
This leads to \emph{leave-one-out approximate truthfulness}: fixing event $t$, the optimal report satisfies $r_{it} \approx p_{it}$ for \emph{any} fixed report vector $\vect{r}_{i,-t}$ across events $t' \neq t$.
The strength of that guarantee leads to the strong $\linf$ version of approximate truthfulness.
However, significant groundwork is required to quantify ``enough'' uncertainty, and demonstrate how it propagates from beliefs to utilities to approximately truthful reports.
The remainder of this section outlines our strategy for proving Theorem~\ref{thm:2-forecaster-truthful}; see \S~\ref{appendix:approx-truthfulness} for the full proof.

\subsection{Notation}
Let the random variable $R_{jt} \in [0, 1]$ be $j$'s report for event $t$, and let $\vect{R}_j = (R_{j1}, R_{j2}, \smalldots, R_{jm})$ be $j$'s random report vector. 
For any forecaster $i$, let $\vect{R}_{-i,t} = (\vect{R}_1, \smalldots, \vect{R}_{i-1}, \vect{R}_{i+1}, \smalldots, \vect{R}_n)$ be the random vector of reports for all agents $j \neq i$.
In this section, we model forecaster $i$'s belief as a joint distribution $\D_i$ over event outcomes $\vect{Y}$ and others' reports $\vect{R}_{-i,t}$, as in \citet{witkowski2023incentive}.
We denote $\vect{p}_i = (p_{i1}, \ldots, p_{im})$ as the marginal distribution under $\D_i$ over outcomes $\vect{Y}$.
We use this belief model partly as a shorthand in notation, but it is also a more general version of the Bayesian setting introduced earlier and thus encapsulates a broader range of scenarios.

\subsection{Utilities}
To distinguish between different utility functions, we will use subscripts to denote the variable of interest and overlines to denote expectations.
Let $\overline{U}_i(\vect{r}_i) = \E_{\D_i} U_i (\vect{r}_i; \vect{R}_j, \vect{Y})$ be forecaster $i$'s expected utility under her belief distribution as a function of report vector $\vect{r}_i$.
We also define $U_{it}(r_{it}; \vect{r}_{i,-t}, \vect{R}_j, \vect{Y}) = U_i(\vect{r}_i; \vect{R}_j, \vect{Y})$ as a function of just $r_{it}$, fixing $\vect{r}_{i,-t}$.
Let 
\[\overline{U}_{it} (r_{it}; \vect{r}_{i,-t}, R_{j}, Y) =  \E_{(\vect{R}_{j}, \vect{Y}) \sim \D_i}  U_{it} (r_{it}; \vect{r}_{i,-t}, \vect{R}_j, \vect{Y})\]
be $i$'s expected utility as a function of $r_{it}$, over all outcomes and reports of forecaster $j$.
When the right hand terms are clear we will simply use $U_{it}(r_{it})$ and $\overline{U}_{it}(r_{it})$.

For any $t$, let $S(r_{it}, R_{jt}, Y_t)$ $= S(R_{jt}, Y_t) - S(r_{it}, Y_t)$ denote the difference between $j$ and $i$'s score on event $t$.
We impose a smoothness assumption on the sum of score differences $\sum_t S(r_{it}, R_{jt}, Y_t)$ to ensure that the probability of a tie is zero.

\begin{assumption} \label{assumption:smoothness}
    For every forecaster $i$ and any report $\vect{r}_i \in \I^m$, their expected score difference CDF
    $G_i(x; \vect{r}_i) = \Pr_{D_i}\left[\sum_t S(r_{it}, R_{jt}, Y_t) \leq x\right]$ is absolutely continuous.
\end{assumption}

As long as one event's score difference is absolutely continuous, the assumption is satisfied. 
Though it may seem unrealistic to assume a distribution over reports is continuous, note that our analysis would easily extend to settings without this assumption since ties occur with vanishingly small probability in practice; see \S~\ref{appendix:ties} for a full discussion.

Under Assumption~\ref{assumption:smoothness}, $\Pr_{\D_i} \left[|\winner(\vect{r}_i,\vect{R}_j,\vect{y})| > 1\right] = 0$.
It follows that each forecaster's expected utility is simply their probability of being in the winning set under Simple Max.
That is, the expected utility satisfies $\overline{U}_i(\vect{r}_i) = G_i(0; \vect{r}_i).$
Moreover, $\overline{U}_{it} (r_{it})$ corresponds to the expected probability that $-S(r_{it}, R_{jt}, Y_t)$ is greater than the cumulative score difference across events $t' \neq t$.
Specifically, if
\[G_{it}(x) = \Pr_{(\vect{Y}_{-t}, \vect{R}_{j,-t}) \sim \D_i}  \left[\sum_{t' \neq t} S(r_{it'}, R_{jt'}, Y_{t'}) \leq x\right]\]
is the CDF of the score difference distribution for $t' \neq t$,
\begin{equation} %
    \overline{U}_{it}(r_{it}) = \E_{(R_{jt}, Y_t) \sim \D_i} G_{it} \left( -S(r_{it}, R_{jt}, Y_t) \right) ~.
\end{equation}

\subsection{Approximate affineness}

We first derive sufficient conditions for forecaster $i$ to be $\gamma$-$\linf$ approximately truthful on event $t$, given a fixed report vector $\vect{r}_{i,-t}$.
In particular, we show that forecaster $i$'s utility function $G_{it}$ is \emph{approximately affine}.

\begin{definition}
    Function $F$ is \emph{$(\beta, \alpha, \epsilon$)-approximately affine} when
    $$\sup_{x \in [-1, 1]} \left| F(x) - (\beta x + \alpha) \right| \leq \epsilon.$$
\end{definition}

Note that we are evaluating $G_{it}$ at the score difference $S(r_{it}, Y_t) - S(R_{jt}, Y_t)$.
Since $G_{it}$ is approximately affine, forecaster $i$ is (roughly) maximizing her score.
By properties of proper scoring rules, she will report (roughly) truthfully. 
In the following lemma, we quantify the worst-case deviation of her report to her belief as a function of approximate affineness. 

\begin{lemma} \label{lem:approx-linear-implies-approx-truthful}
    If $G_{it}$ is $(\beta, \alpha, \epsilon$)-approximately affine for each event $t$, agent $i$ is $\gamma$-$\linf$ approximately truthful for $\gamma = \sqrt{2\epsilon/\beta}$.
\end{lemma}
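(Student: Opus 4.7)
The plan is to show that, for any fixed $\vect{r}_{i,-t}$, every maximizer $r^*_{it}$ of $\overline{U}_{it}(\cdot)$ satisfies $|r^*_{it} - p_{it}| \le \sqrt{2\epsilon/\beta}$; then applying this coordinate-by-coordinate at $i$'s joint best response immediately yields the $\linf$ bound. The key observation is that approximate affineness of $G_{it}$ reduces maximizing the win probability to approximately maximizing the expected quadratic score, and strict properness of the quadratic score then converts a small suboptimality in expected score into a small deviation from the truthful report, with the $\sqrt{\cdot}$ rate coming from its quadratic curvature.

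Concretely, under the quadratic score the argument $S(r_{it}, Y_t) - S(R_{jt}, Y_t)$ lies in $[-1,1]$, so the $(\beta, \alpha, \epsilon)$-affineness bound for $G_{it}$ applies pointwise inside the expectation defining $\overline{U}_{it}$. Pulling the expectation over $(R_{jt}, Y_t)\sim\D_i$ through the $\epsilon$-envelope yields
\[
\bigl| \overline{U}_{it}(r_{it}) - ( \beta V_{it}(r_{it}) + c_t ) \bigr| \le \epsilon,
\]
where $V_{it}(r) := \E_{Y_t}[S(r, Y_t)] = 1 - (r - p_{it})^2 - p_{it}(1 - p_{it})$ is the expected quadratic score and $c_t := \alpha - \beta\,\E[S(R_{jt}, Y_t)]$ is a constant independent of $r_{it}$.

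To close the argument, I would chain $\overline{U}_{it}(r^*_{it}) \ge \overline{U}_{it}(p_{it})$ with two applications of this envelope to obtain $\beta V_{it}(r^*_{it}) \ge \beta V_{it}(p_{it}) - 2\epsilon$. Plugging in the explicit form of $V_{it}$ and cancelling the $r_{it}$-independent terms gives $\beta (r^*_{it} - p_{it})^2 \le 2\epsilon$, hence $|r^*_{it} - p_{it}| \le \sqrt{2\epsilon/\beta} = \gamma$. The one subtlety requiring care is bookkeeping: $G_{it}$ itself depends on $\vect{r}_{i,-t}$ through the tail scores, so the hypothesis must be invoked for whichever $\vect{r}_{i,-t}$ actually arises at the best response, and we must check that $c_t$ really is $r_{it}$-independent once $\vect{r}_{i,-t}$ has been fixed. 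Both are immediate once the notation is unpacked; the real content of the lemma is the quadratic-in-$(r_{it}-p_{it})$ gap supplied by strict properness, which is exactly what gives the square-root rate in $\gamma$.
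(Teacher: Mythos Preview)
Your proposal is correct and follows essentially the same approach as the paper: both use the $(\beta,\alpha,\epsilon)$-affineness envelope on $G_{it}$ (incurring a total slack of $2\epsilon$) together with the quadratic curvature of the Brier score to obtain $\beta(r_{it}-p_{it})^2 \le 2\epsilon$ at any best response. The only difference is cosmetic---the paper phrases the conclusion as ``$p_{it}$ strictly dominates any $r_{it}$ with $|r_{it}-p_{it}|>\gamma$'' whereas you bound the maximizer directly---and your bookkeeping remarks about $c_t$ and the dependence of $G_{it}$ on $\vect{r}_{i,-t}$ are accurate.
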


\begin{proof}
    Let $\hat{\epsilon}(X) = G_{it}(X) - (\beta X + \alpha)$ be the random variable representing the error of approximating $G_{it}(X)$ with an affine function, where $X \in [0, 1]$ with probability one.
    Then,
    \begin{align*}
        \overline{U}_{it}(r_{it}) 
        &= \E_{\D_i} \left[ \alpha - \beta  S(r_{it}, R_{jt}, Y_t) + \hat{\epsilon} \left( -S(r_{it}, R_{jt}, Y_t) \right) \right] \\
        &= \alpha + \beta \E_{Y_t \sim p_i} \bigl[ \E_{R_{jt} | Y_t} - S(r_{it}, R_{jt}, Y_t)\bigr] + \E_{\D_i} \left[ \hat{\epsilon} \left( - S(r_{it}, R_{jt}, Y_t) \right) \right] ~.
    \end{align*}
    With probability 1, $|\hat{\epsilon}(X)|$ $\leq \epsilon$.
    Then, using $S(p_{it}, R_{jt}, Y_t) - S(r_{it}, R_{jt}, Y_t) = p_{it}^2 - r_{it}^2 + 2Y_t (r_{it} - p_{it})$, we can strictly bound the difference in utilities
    \begin{align*}
        \overline{U}_{it}(p_{it})  -  \overline{U}_{it}(r_{it})  &\geq \beta  \E_{Y_t \sim p_i} \kern-0.6em \left[ p_{it}^2 - r_{it}^2 + 2Y_t (r_{it} - p_{it}) \right]  -  2\epsilon \\ 
        &= \beta(p_{it} - r_{it})^2 - 2\epsilon \\
        &> \beta \left({2\epsilon/\beta}\right) - 2\epsilon \\
        &= 0.
    \end{align*}
    Thus, $r_{it}$ is strictly dominated by $p_{it}$ if $|r_{it} - p_{it}| > \gamma$, so agent $i$ will report $\gamma$-$\linf$ approximately truthfully.
\end{proof}

Figure \ref{fig:approx-truthful} gives intuition for when this result is useful: specifically, approximating $G_{it}$ by a Normal CDF, we achieve small $\gamma$ when the mean of the cumulative score difference has small magnitude relative to the variance. 

\begin{figure}[t]
    \centering
      \includegraphics[width=0.55\linewidth]{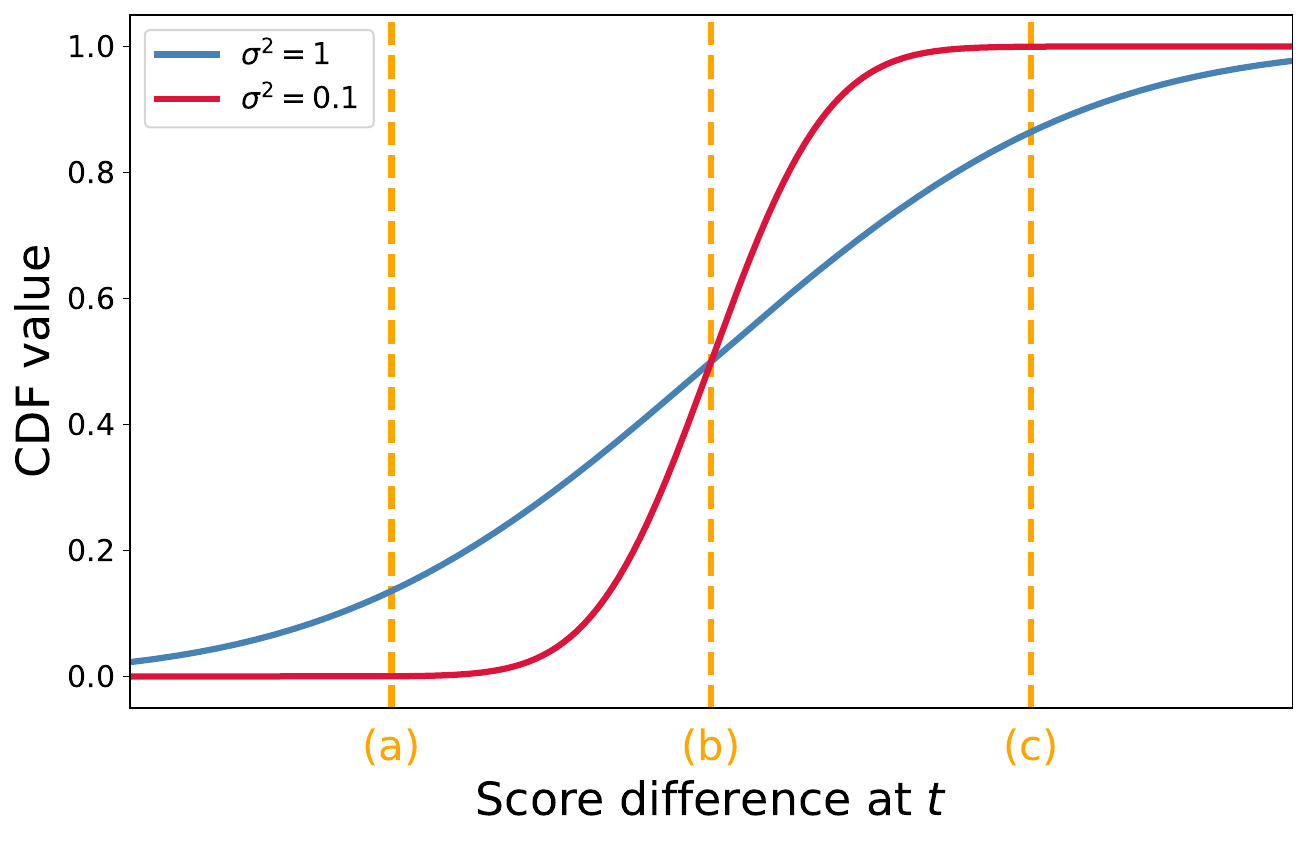}
    \caption{
    Two examples of Normal CDFs approximating the expected utility $G_{it}$.
    If the score difference at event $t$ (in $[-1, 1]$) occurs at (a) or (c), approximate affineness ceases to hold as the CDF flattens out.
    We observe small error of the affine approximation around (b), the mean; thus we want its magnitude to be small.
    Note also the area where the CDF resembles an affine function is much narrower for the distribution with lower variance (\textcolor{red}{red}).
    }
    \label{fig:approx-truthful}
\end{figure}

\paragraph{Edgeworth expansions.}
We next aim to characterize forecaster $i$'s utility function over her report $r_{it}$.
Since $G_{it}$ is the CDF of a sum of independent random variables, we can apply the Central Limit Theorem (CLT).
Unfortunately, the convergence rate of $G_{it}$ to a normal distribution is not tight enough to control the approximate affine error $\epsilon$ in Lemma \ref{lem:approx-linear-implies-approx-truthful} \citep{esseen1942}.
We thus turn to the theory of \emph{Edgeworth expansions}.
An Edgeworth expansion can be thought of as a tighter version of the CLT. 
While we know $G_{it}$ eventually approaches a normal distribution, we can achieve a better convergence rate by adding on more terms \citep{petrov1975independent}.
These terms depend on higher cumulants of the distribution; for example, the first-order Edgeworth expansion includes a function depending on the third moment. 
We can bound the $\linf$ distance between $G_{it}$ and its second-order Edgeworth expansion $E_{it}$ under some conditions on the variance and smoothness of score differences:

\begin{condition} \label{cond:edgeworth}
    For forecaster $i$'s strategic report $\vect{r}_i$ and all $t$,
    \begin{enumerate}
        \item (Uncertainty) the score differences $S(r_{it}, R_{jt}, Y_t)$ have variance uniformly bounded below by some constant. 
        \item (Smoothness) the probability density of each score difference $S(r_{it}, R_{jt}, Y_t)$ is a function of bounded variation.\footnote{Absolute continuity or continuous differentiability are both sufficient conditions for bounded variation.}
    \end{enumerate}
\end{condition}
We include a weaker condition which only requires smoothness of some score differences in \S~\ref{appendix:approx-truthfulness}.

\paragraph{Leave-one-out approximate truthfulness.}
We now have the tools to characterize forecaster $i$'s utility function $G_{it}$ as $(\beta, \alpha, \epsilon)$-approximately affine.
In particular, we derive the values of $\beta, \alpha$ and $\epsilon$ for the Edgeworth expansion $E_{it}$ of $G_{it}$. 
To do so, we take the first-order Taylor expansion around 0 to derive the slope $\beta$ and intercept $\alpha$.
We can then bound the error $\epsilon_1$ of the affine Taylor function by the Lagrange remainder.
We also have the convergence rate of $G_{it}$ to its Edgeworth expansion, i.e. $\|G_{it} - E_{it}\|_{\infty} \leq \epsilon_2$.
By the triangle inequality, then, $G_{it}$ is $(\beta, \alpha, \epsilon_1 + \epsilon_2)$-approximately affine.
It follows by Lemma \ref{lem:approx-linear-implies-approx-truthful} that forecaster $i$ will be $\gamma$-$\linf$ approximately truthful for $\gamma = \sqrt{\tfrac{2(\epsilon_1 + \epsilon_2)}{\beta}}$ on event $t$ (under Condition \ref{cond:edgeworth}). 
Note that $\gamma$ depends on the mean and variance of score differences $\{ S(r_{it'}, R_{jt'}, Y_{t'}) \}_{t' \neq t},$ which in turn depends on the report vector $\vect{r}_{-i,t}$.

\subsection{Approximate truthfulness}
Our leave-one-out result gives a sufficient condition on the value of $\gamma$ for approximate truthfulness on event $t$, under \emph{any} fixed report vector $\vect{r}_{i,-t}.$
In order to reach our main result, then, we impose conditions on the entire vector $\vect{r}_i$ such that the condition for each event $t$ is met.
We define $\sigma_i^2(\vect{r}_i)$ and $P_i(\vect{r}_i)$ as the variance and absolute third moment of the aggregate score difference $\sum_t (S(r_{it}, R_{jt}, Y_t))$ induced by $\vect{r}_i$. 
Let $P_i = \max_{\vect{r}_i} P_i(\vect{r}_i)$ and  $\sigma_i = \min_{\vect{r}_i} \sigma_i(\vect{r}_i)$.

At a high level, the first condition implies there is enough randomness (variance) in scores from forecaster $i$'s perspective.
Meanwhile, the latter two conditions imply that forecaster $i$ thinks she is \emph{good}, as the expected utility of her belief is bounded below, but not \emph{too} good, as the expected utility of any report vector is bounded above.

\begin{condition} \label{cond:final-thm-2-forecasters}
    ~
    
    \begin{enumerate}
        \item $\sigma_i \geq 4$.
        \item For some $\delta \in (0, 1)$, $\overline{U}(\vect{p}_i) \geq  \tfrac{1}{2} - \delta$; and for any report vector $\vect{r}_i$, $\overline{U}(\vect{r}_i) \leq \tfrac{1}{2} + \delta$.
        \item $\tfrac{P_i}{\sigma_i^3} + \delta \leq 0.33$.
    \end{enumerate}
\end{condition}

Note that $\tfrac{P_i}{\sigma_i^3} = O\left(m^{-1/2}\right)$, so we expect Condition~\ref{cond:final-thm-2-forecasters} is satisfied more easily as $m$ grows.
With that bound, and assuming $\tfrac{P_i}{\sigma_i^3}$ is small, the latter two items imply forecaster $i$'s expected utility is roughly bounded to the interval $[0.2, 0.8]$.

We can now state our approximate truthfulness result.
\begin{theorem} \label{thm:2-forecaster-truthful} 
When Conditions \ref{cond:edgeworth} and \ref{cond:final-thm-2-forecasters} hold, forecaster $i$ is $\gamma$-$\linf$ approximately truthful with $\gamma = O \left(\sigma_i^{-1/2}\right).$
\end{theorem}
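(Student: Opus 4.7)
The plan is to apply the leave-one-out strategy described above. Fix any event $t$ and any coordinates $\vect{r}_{i,-t}$; it suffices, by Lemma~\ref{lem:approx-linear-implies-approx-truthful}, to show that $G_{it}$ is $(\beta_t,\alpha_t,\epsilon_t)$-approximately affine with $\epsilon_t/\beta_t = O(\sigma_i^{-1})$. Taking the resulting per-coordinate bound to be uniform in $t$ then gives $\gamma$-$\linf$ approximate truthfulness across the entire report vector, with $\gamma = \max_t \sqrt{2\epsilon_t/\beta_t} = O(\sigma_i^{-1/2})$.

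The affineness estimate assembles from two pieces, each $O(\sigma_i^{-2})$. The first controls the approximation of $G_{it}$ by its second-order Edgeworth expansion $E_{it}$: by Condition~\ref{cond:edgeworth}, the leave-one-out aggregate $\sum_{t'\neq t} S(r_{it'},R_{jt'},Y_{t'})$ is a sum of independent variables with uniformly bounded variance and densities of bounded variation, so the Petrov-type Edgeworth bound gives $\|G_{it}-E_{it}\|_\infty \leq \epsilon_2 = O(\sigma_i^{-2})$. The second is the Taylor error of $E_{it}$ near $0$: $E_{it}$ is the sum of a Normal CDF (with variance $\Theta(\sigma_i^2)$) and smooth Hermite-polynomial correction terms, so a first-order Taylor expansion around $0$ yields an affine function $\beta_t x + \alpha_t$ with Lagrange remainder $\epsilon_1 = \tfrac12 \sup_{x\in[-1,1]}|E_{it}''(x)| = O(\sigma_i^{-2})$, since each differentiation of such a CDF pays a factor of $1/\sigma_i$. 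The triangle inequality gives $\epsilon_t := \epsilon_1+\epsilon_2 = O(\sigma_i^{-2})$.

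The main obstacle is to lower bound the slope $\beta_t = E_{it}'(0)$ by $\Omega(\sigma_i^{-1})$, which is where Condition~\ref{cond:final-thm-2-forecasters} earns its keep. Writing $E_{it}$ approximately as the CDF of a Normal with mean $\mu_{-t}$ and standard deviation $\Theta(\sigma_i)$, one has $\beta_t = \Theta(\sigma_i^{-1})\,\phi(-\mu_{-t}/\sigma_i)$ up to $O(\sigma_i^{-2})$ correction, so the task reduces to bounding $|\mu_{-t}/\sigma_i|$ by a universal constant. Condition~\ref{cond:final-thm-2-forecasters}.2 places $\overline{U}_i(\vect{r}_i)$ in $[\tfrac12-\delta,\tfrac12+\delta]$, which (via the almost-Normal approximation and Condition~\ref{cond:final-thm-2-forecasters}.3 bounding the Edgeworth correction $P_i/\sigma_i^3$) forces the standardized mean of the \emph{full} aggregate score difference into a bounded interval; removing one coordinate shifts this by at most $O(1/\sigma_i)$, so $|\mu_{-t}/\sigma_i|$ remains bounded, $\phi(-\mu_{-t}/\sigma_i) = \Omega(1)$, and $\beta_t = \Omega(\sigma_i^{-1})$. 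I expect this last propagation --- from the global utility bounds of Condition~\ref{cond:final-thm-2-forecasters} to a \emph{uniform} event-level bound on $\mu_{-t}/\sigma_i$ that survives perturbations in the single coordinate $r_{it}$ --- to be the most delicate calculation. Combining with the affineness estimate through Lemma~\ref{lem:approx-linear-implies-approx-truthful} then delivers $\gamma_t = O(\sigma_i^{-1/2})$ uniformly in $t$, completing the proof.
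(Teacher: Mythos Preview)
Your proposal follows the same architecture as the paper's proof: leave-one-out via Lemma~\ref{lem:approx-linear-implies-approx-truthful}, with the affine error decomposed as (Edgeworth approximation) $+$ (first-order Taylor remainder of $E_{it}$), and Condition~\ref{cond:final-thm-2-forecasters} used through a Berry--Esseen/normal-CDF argument to bound the standardized mean and hence the slope $\beta_t$ from below. The paper packages the last step as a separate lemma (Lemma~\ref{lem:bounded-ratio}) and then assembles everything in Lemma~\ref{lem:take-one-out-approx-truthful}, but the content is what you describe.

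One point to tighten, however: you open by fixing ``any coordinates $\vect r_{i,-t}$'' and claim it suffices to establish affineness there, but your slope bound in the third paragraph does \emph{not} go through for arbitrary $\vect r_{i,-t}$. Condition~\ref{cond:final-thm-2-forecasters}.2 only asserts the \emph{lower} bound $\overline U_i(\vect r_i)\ge \tfrac12-\delta$ at $\vect r_i=\vect p_i$; the two-sided interval $[\tfrac12-\delta,\tfrac12+\delta]$ is obtained only for a best response $\hat{\vect r}_i$ (since $\overline U_i(\hat{\vect r}_i)\ge \overline U_i(\vect p_i)$). The paper runs the entire leave-one-out calculation at $\hat{\vect r}_i$ for exactly this reason. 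This restriction is also what validates your $\epsilon_1=O(\sigma_i^{-2})$ claim: ``each differentiation pays a factor $1/\sigma_i$'' is only true while the standardized argument $(z-\mu_{it})/\sigma_{it}$ stays bounded, which again relies on $|\mu_{it}/\sigma_{it}|$ being controlled at the best response. Once you make that restriction explicit, your argument lines up with the paper's.
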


Note that $\sigma_i^2 = O(m)$, so that we expect $\gamma$ to decrease like $O(m^{-1/4})$.
At first glance, then, Theorem \ref{thm:2-forecaster-truthful} seems to imply that the $\linf$ distance of forecasters' reports converge to their beliefs as the number of events grows larger.
This is true to an extent, but recall that Condition \ref{cond:final-thm-2-forecasters} requires that the forecasters believe they are competitive with each other. 
In a typical setting where both forecasters have fixed skills, e.g.\ with forecaster $i$ being slightly better at predicting the weather than forecaster $j$, for large $m$ forecaster $j$ can no longer be competitive: each forecaster's aggregate score would concentrate about its mean and the distribution of forecaster $i$'s score would far surpass $j$'s.
Thus, the conditions for approximate truthfulness in practice may require $m$ to be small enough for the best forecaster to have peers.

We expect that similar results would hold for $n > 2$ forecasters, though analysis becomes much more difficult since utilities now correspond to beating the \emph{maximum} cumulative score difference across forecasters.
We expect the resulting $\gamma$ to grow quickly in $n$, as even when all forecasters have the same skill, when $n$ is larger, they each expect to perform much worse than the maximum of the rest.
This extension would thus be consistent with the fact that good forecasters still extremize in practice; see \S~\ref{sec:discussion}.

\section{Discussion} 
\label{sec:discussion}

This paper presents the first strategic analysis of traditional forecasting competitions beyond small numbers of events.
We first refute long-standing claims about long-run truthful behavior in forecasting competitions, via a counterexample with no approximately truthful equilibrium even for an arbitrarily large number of events.
This example suggests that the best forecaster benefits by hedging, moving her predictions closer to the prior.
We then show that two forecasters will be approximately truthful when close together in skill and given sufficient uncertainty about the other's reports.

\paragraph{Implications for practice.}
Let us summarize our results with a practical conjecture of how strategic behavior will play out in real competitions.
It seems rare that a single forecaster will be far ahead of the rest; a more typical scenario might have a pack of good forecasters with a long tail of lower-skilled forecasters.
It might be tempting to assume that our results in \S~\ref{sec:approx-truthfulness} will dominate for the pack of good forecasters, yielding approximate truthfulness in that group, with increasing incentive to extremize for lower-skilled forecasters.
Yet as alluded to at the end of \S~\ref{sec:approx-truthfulness}, unless the pack of good forecasters is of size 2 only, each skilled forecaster will face the maximum score of the rest, again giving an incentive to extremize (see Figure~\ref{fig:extremizing-hedging-pdfs} where now the \textcolor{blue}{blue} distribution is the maximum score of the competition).
This rough conjecture matches behavior seen in practice, where skilled (and indeed, winning) forecasters extremize their true beliefs \citep{kaggle2017march, alexander2022who}.

That said, our results suggest ways to alleviate these incentives.
Even when the number of events is quite large, contestants may deviate significantly from being truthful when they know they are far ahead or far behind.
In data science and machine learning competitions, this phenomenon could mean limiting the amount of information revealed in the leaderboard---not just for the usual information-theoretic reasons of preventing contestants from learning too much about the test/evaluation split of the data set, but because of the strategic properties of Simple Max.
Finally, to eliminate the extremizing among the most skilled forecasters, it could be beneficial to run a truncated version of the Soft Max mechanism of~\citet{frongillo2021efficient} which drops low scoring forecasters, with the analysis of \S~\ref{sec:approx-truthfulness} allowing a higher parameter $\eta$ than suggested by that paper.

\paragraph{Robustness of forecaster selection.}
The designer of a forecasting competition may naturally have two goals: (1) collect accurate predictions, and (2) select (one of) the best forecasters as the winner.
Thus far we have mainly focused on (1), which may be the most important when the forecasts are aggregated or used directly to make timely predictions, as in geopolitical competitions.
Yet in some settings goal (2) may be the primary objective.
Interestingly, while we have seen that Simple Max can fail wildly in goal (1), even for arbitrarily large numbers of events, our results are consistent with it still succeeding in goal (2).%
\footnote{A more realistic model of effort may paint a worse picture even for (2), as time is spent strategizing instead of improving forecasts.}
That is, even as forecasters deviate substantially from their beliefs, it has remained true that a best forecaster wins with high probability.
Specifically, we conjecture that with probability $1-\delta$, when the number of events is $m=\Omega\left(\tfrac{\log(n/\delta)}{\epsilon^2}\right)$, Simple Max selects the best forecaster in some Bayes-Nash equilibrium.

\paragraph{Future work.}
Aside from addressing any of the specific conjectures posited above, perhaps the most pressing need is to understand the equilibrium of Simple Max for $m > 2$.
Such an analysis is missing even for the specific example in \S~\ref{sec:counterexample}; while our results suggest that the bad forecasters will extremize while the good forecaster hedges, it is unknown if both behaviors are stable in equilibrium.
Perhaps the first step would be to understand the equilibrium in a ``full information'' setting, where the true probabilities are common knowledge.
Extensions of this paper's results to forecasting competitions that employ different scoring rules, such as log loss, may also hold interesting insights.

\subsection*{Acknowledgements}
We are grateful to Jens Witkowski and Siddarth Srinivasan for their insights and feedback.
We would also like to thank 
Adam Bloniarz,
Gülce Kardes,
Ezra Karger,
Bo Waggoner, 
and
Brian Zaharatos
for helpful discussions and comments.
This material is based upon work supported by the National Science Foundation under Grant No. IIS-2045347.

\bibliography{references}

\appendix

\section{Proofs omitted from Section \ref{sec:counterexample}}
\label{appendix:counterexample}

Recall that
$d_p(\vect y) = \|\vect p - \vect y\|_2$, 
$d^*(\vect y) = \|\vect r^* - \vect y\|_2$, 
$d_i(\vect y) = \|\vect r_i - \vect y\|_2$,
and $d_{j}(\vect y) = \|\vect r_j - \vect y\|_2$ be the $\ltwo$ distances from $\vect y$ to 
$\vect p$, $\vect p^*$, $\vect r_i$ and $\vect r_j$ for any $j \neq i$ respectively.

Additionally, we will use the fact that
\begin{align*}
    (d^*(\vect y))^2 
    &= \|\vect y\|_1 (1 - p^*)^2 + (m-\|\vect y\|_1)(p^*)^2 \\
    &= m (p^*)^2 + \|\vect y\|_1(1 - 2p^*),
\end{align*}
and similarly, $(d_p(\vect y))^2 = mp^2 + \|\vect y\|_1(1 - 2p)$.

\subsection{Proof of Lemma~\ref{lem:counterexample-lower-ineq}}
\begin{proof} 
    By the triangle inequality,
    \begin{align*}
        d_j(\vect y) &\geq \| \vect c - \vect y \|_2 - \epsilon \sqrt{m} \\        
        &= \sqrt{m}\left(\frac{1}{2} - \epsilon\right) \\
        &> \sqrt{m}\sqrt{p^*(1 - p^*)} + 2\\
        &= \sqrt{m (p^*)^2 + m p^*(1 - 2p^*)} + 2\\
        &\geq \sqrt{m (p^*)^2 + \|\vect y\|_1(1 - 2p^*)} + 2\\
        &= d^*(\vect y) + 2~.
    \end{align*}
\end{proof}

\subsection{Proof of Lemma~\ref{lem:counterexample-upper-ineq}}
\begin{proof}
    When $\|\vect y\|_1 = p^* m$, 
    \begin{align*}
        d_p(\vect y) - \epsilon \sqrt{m} 
        &= \sqrt{m p^2 + m p^* (1 - 2p)} - \epsilon \sqrt{m} \\ 
        &= \sqrt{m} \left( \frac{1}{2} - \epsilon \right).
    \end{align*}
    Meanwhile, 
    \begin{align*}
        d^*(\vect y) &= \sqrt{mp^2 + p^*m (1 - 2p^*)} \\
        &= \sqrt{mp^*(1 - p^*)} \\
        &< \sqrt{m} \left( \frac{1}{2} - \epsilon \right) - 2.
    \end{align*}

    Note that $d_p(\vect y)$ and $d^*(\vect y)$ can also be written as a function of just $\|\vect y\|_1$.
    Therefore, we define 
    \[f(x) = \sqrt{mp^2 + x (1 - 2p)} - \sqrt{m (p^*)^2 + x(1 - 2p^*)} \]
    so that $f(\|\vect y\|_1) = d_p(\vect y) - d^*(\vect y)$.
    Now, consider the derivative of $f$ at any $x = \|\vect y\|_1$
    \[\frac{df}{dx} = \left(\frac{1}{2} - p\right) \left[\frac{1}{d_p(\vect y)} - \frac{1}{2d^*(\vect y)} \right]. \]
    Since $p < \frac{1}{2}$,
    \begin{align*}
        4(d^*(\vect y))^2 - (d_p(\vect y))^2 
        &=  m \left(\frac{1}{4} + p \right) + \|\vect y\|_1 \left( \frac{1}{2} - p \right) \\
        &> 0,
    \end{align*}
    so $\frac{df}{dx} > 0$.
    Therefore, $f$ is increasing in $x$ and $d_p(\vect y) - d^*(\vect y)$ increases as $\|\vect y\|_1$ increases.
    By the triangle inequality $d_i(\vect y) - d^*(\vect y) \geq (d_p(\vect y) - \epsilon \sqrt{m}) - d^*(\vect y) > 2$ for $\|\vect y\|_1 = p^*m$.
    It follows that $d_i(\vect y) - d^*(\vect y) > 2$ for $\|\vect y\|_1\geq p^*m$.
\end{proof}

\section{Approximate Truthfulness Derivations (Omitted from \S~\ref{sec:approx-truthfulness})} \label{appendix:approx-truthfulness}
\subsection{Notation}
We first introduce additional notation needed to analyze forecaster $i$'s utility function.
Consider the sum of independent random variables $\sum_{t=1}^m X_t$ with mean $\mu$ and variance $\sigma^2$.
Let $\phi(x)$ and $\Phi(x)$ be the PDF and CDF respectively of the normal distribution with mean 0 and variance 1.
Let $H_l(x)$ be the Hermite polynomial of degree $l$, and denote $\kappa_l$ as the $l$-th order cumulant of the sum of $X_1, X_2, \ldots, X_m$. 
Note that up to $l = 3$, the $l$-th order cumulant corresponds to the central moment (so $\kappa_3$ is the sum of third moments). 
We denote the ratio between the $l$-th order cumulant and the variance $\sigma^2$ as $C_l = \frac{\kappa_l}{\sigma^2}.$

Now, fix an event $t$.
Let $\mu_{it}$ and $\sigma_{it}^2$ be the mean and variance, respectively, of the sum of score differences $\sum_{t' \neq t} S(r_{it'}, R_{jt'}, Y_{t'})$ under $i$'s belief distribution $\D_i$ for a fixed set of reports $\vect{r}_{i,-t}$.
We suppress the fixed vector $\vect{r}_{i,-t}$ as a parameter for simplicity.
We denote $\kappa_{l, it}$ to be the $l$-th order cumulant of the sum of score differences $\sum_{t' \neq t} S(r_{it'}, R_{jt'}, Y_{t'})$; similarly, we have $C_{l,it} = \frac{\kappa_{l,it}}{\sigma_{it}^2}.$

We also define moments of the cumulative score difference across \emph{all} events as a function of a report vector $\vect{r}_i$:
\begin{align*}
    \mu_i(\vect{r}_i) &= \sum_t \E \left[ S(r_{it}, R_{jt}, Y_t)\right], \\
    \sigma_i^2(\vect{r}_i) &= \sum_t \E \left( S(r_{it}, R_{jt}, Y_t) - \E[S(r_{it}, R_{jt}, Y_t)] \right)^2, \\
    P_i(\vect{r}_i) &= \sum_t \E |S(r_{it}, R_{jt}, Y_t) - \E[S(r_{it}, R_{jt}, Y_t)]|^3;
\end{align*}
i.e. $\mu_i(\vect{r}_i)$ is the mean, $\sigma_i^2(\vect{r}_i)$ is the variance, and $P_i(\vect{r}_i)$ is the third absolute moment.
Let $P_i = \max_{\vect{r}_i} P_i(\vect{r}_i)$ and  $\sigma_i = \min_{\vect{r}_i} \sigma_i(\vect{r}_i)$.

Finally, we define $\kappa_{l, i}$ to be the \emph{maximum} $l$-th order cumulant over report vectors $\vect{r}_i$ of the sum of score differences $\sum_{t = 1}^m S(r_{it}, R_{jt}, Y_{t})$, and we denote $C_{l,i} = \frac{\kappa_{l, i}}{\sigma_i^2}.$

\subsection{Edgeworth Expansions}
We now introduce the theory of Edgeworth expansions, following \citet{petrov1975independent}, and use this to characterize forecaster $i$'s expected utility function $G_{it}$.
Let $E(x)$ be the Edgeworth expansion up to two terms of the sum of independent random variables $X_1, X_2, \ldots, X_m$, where $\mu$ is the mean of these variables and $\sigma^2$ is the variance. 
That is, 
\[E(x) = \Phi \left(\frac{x - \mu}{\sigma}\right) + \frac{Q_1\left(\frac{x - \mu}{\sigma}\right)}{\sqrt{m}} + \frac{Q_2\left(\frac{x - \mu}{\sigma}\right)}{m},\]
where 
\begin{align*}
    \frac{Q_1(x)}{\sqrt{m}} &= - \frac{1}{ \sqrt{2\pi}} e^{-x^2/2} \left( \frac{\kappa_3}{\sigma^3} \right) \frac{H_2(x)}{6}, \\
    \frac{Q_2(x)}{m} &= \frac{1}{\sqrt{2\pi}} e^{-x^2/2} \left( \left( \frac{\kappa_3}{\sigma^3}\right)^2 \frac{H_5(x)}{72} + \left( \frac{\kappa_4}{\sigma^4}\right) \frac{H_3(x)}{24} \right).
\end{align*}
Let $E_{it}(x)$ be the Edgeworth expansion for independent random variables $\{ S(r_{it'}, R_{jt'}, Y_t') \}_{t' \neq t}$.

We first state a weaker condition which is satisfied by Condition \ref{cond:edgeworth} for convergence of $G_{it}$ to its Edgeworth expansion.
Note that we only require smoothness of a subsequence of score differences.

\begin{condition}{\cite{petrov1975independent}} \label{cond:edgeworth-weak}
    For forecaster $i$'s strategic report $\vect{r}_i$ and all events $t$,
    \begin{enumerate}
        \item (Uncertainty) the score differences $S(r_{it}, R_{jt}, Y_t)$ have variance uniformly bounded below by some constant. 
        \item There is a subsequence of score differences $\{ S(r_{it_k}, R_{jt_k}, Y_{t_k}) \}_{k=1}^{m^*}$ with $m^* \leq m$ terms such that 
        \begin{enumerate}
            \item (Nontrivial size) $\lim \inf \frac{m^*}{m^{\lambda}} > 0$ for some $\lambda > 0$.
            \item (Smoothness) the density $f_k$ of each difference $S(r_{it_k}, R_{jt_k}, Y_{t_k})$ is a function of bounded variation.
        \end{enumerate}
    \end{enumerate}
\end{condition}

Now we cite the convergence rate of $G_{it}$ to its second-order Edgeworth approximation $E_{it}$ under Condition \ref{cond:edgeworth-weak}:
\begin{lemma}{\citep{petrov1975independent}} \label{lem:edgeworth} 
Given Condition \ref{cond:edgeworth-weak} holds, for sufficiently large $m$ and some constant $D_{it}$,
$\|G_{it} - E_{it} \|_{\infty} \leq D_{it}/m$.
\end{lemma}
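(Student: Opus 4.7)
The plan is to derive this as a direct application of a classical Edgeworth expansion theorem of \citet{petrov1975independent}. First I would reinterpret the target: under any fixed report vector $\vect{r}_{i,-t}$, the variable $T_{it} := \sum_{t' \neq t} S(r_{it'}, R_{jt'}, Y_{t'})$ is a sum of $m-1$ independent random variables (independence across $t'$ follows from the product structure of $\D_i$ over events and opponent reports), each supported in $[-1,1]$. By definition $G_{it}$ is the CDF of $T_{it}$, while $E_{it}$ is the second-order Edgeworth expansion associated to this sum, built from its first four cumulants.

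Next I would verify that Condition~\ref{cond:edgeworth-weak} supplies exactly the hypotheses of Petrov's theorem on Edgeworth expansions for sums of independent, non-identically distributed random variables (e.g.\ Theorem 3 in Chapter 6 of \citet{petrov1975independent}). Three ingredients are required. (a) Finite moments of sufficient order: the summands lie in $[-1,1]$, so absolute moments of every order are uniformly bounded, which in particular gives the fourth-moment control needed for the second-order expansion. (b) A uniform variance lower bound on the summands: this is precisely part 1 of Condition~\ref{cond:edgeworth-weak}, and it implies $\sigma_{it}^2 = \Omega(m)$. (c) A Cram\'er-type smoothness condition on the summand densities: part 2 of Condition~\ref{cond:edgeworth-weak} furnishes a subsequence of size $\Omega(m^\lambda)$ whose densities are of bounded variation, which is the standard form of Petrov's smoothness hypothesis, used to promote bounds on the characteristic function into uniform bounds on the CDF via Esseen's smoothing inequality.

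Applying Petrov's theorem at order $k=2$ then gives $\sup_x |G_{it}(x) - E_{it}(x)| = o(\sigma_{it}^{-2})$; combined with $\sigma_{it}^2 = \Omega(m)$ from (b), this yields $\|G_{it} - E_{it}\|_\infty \leq D_{it}/m$ for a constant $D_{it}$ depending only on the variance lower bound, the uniform moment bounds, and the subsequence exponent $\lambda$. The statement requires only "sufficiently large $m$," which matches the asymptotic nature of Petrov's conclusion.

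The main obstacle is notational rather than mathematical. The excerpt writes $E_{it}$ on the unnormalized scale of $T_{it}$, with expansion terms $Q_1(\cdot)/\sqrt{m}$ and $Q_2(\cdot)/m$ in which cumulants appear as $\kappa_{3,it}/\sigma_{it}^3$ and $\kappa_{4,it}/\sigma_{it}^4$, whereas Petrov's formulation is typically stated for the standardized sum $(T_{it}-\mu_{it})/\sigma_{it}$. I would perform the change of variable $x \mapsto (x-\mu_{it})/\sigma_{it}$ and verify that the resulting polynomial coefficients and powers of $\sqrt{m}$ agree with those in the excerpt (up to constants absorbable into $D_{it}$), using $\sigma_{it}^2 = \Theta(m)$ from (b) to convert $\sigma_{it}^{-2}$ into $m^{-1}$. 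A secondary bookkeeping point is to confirm that Petrov's convergence rate holds with an explicit constant (not merely $o(\sigma_{it}^{-2})$), which requires tracking the dependence on the fourth absolute moment bound and on the bounded-variation norm of the densities in the smooth subsequence; both are controlled uniformly by boundedness of the summands and by Condition~\ref{cond:edgeworth-weak}.
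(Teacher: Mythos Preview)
Your proposal is correct and matches the paper's approach: the paper gives no proof of this lemma and simply attributes it to \citet{petrov1975independent}, so your plan to invoke Petrov's Edgeworth expansion theorem and verify that Condition~\ref{cond:edgeworth-weak} supplies exactly its hypotheses is precisely what is intended. Your additional care in checking the moment, variance, and smoothness ingredients and in reconciling the normalized versus unnormalized scales is more detail than the paper provides, but it is all in the right direction.
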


We can then characterize the values $\beta, \alpha, $ and $\epsilon$ such that the Edgeworth expansion $E(x)$ is $(\beta, \alpha, \epsilon)$-approximately affine.

\begin{lemma} \label{lem:approx-linear}
$E$ is $(\beta, \alpha, \epsilon)$-approximately affine with 
\begin{align}
    \beta &= \frac{1}{\sigma} \phi\left(\frac{-\mu}{\sigma}\right) \biggl( 1 + \frac{C_3}{6 \sigma} H_3 \left(\frac{-\mu}{\sigma} \right) + \frac{C_3^2}{72 \sigma^2}  H_6 \left(\frac{-\mu}{\sigma} \right) + \frac{C_4}{24 \sigma^2} H_4 \left( \frac{-\mu}{\sigma} \right) \biggr), \label{eq:beta} \\
    \alpha &= E(0) \label{eq:alpha}, \\
    \epsilon &= \max_{z \in [-1, 1]} \frac{1}{\sigma^3} \phi \left(\frac{z - \mu}{\sigma}\right) \biggl| (z - \mu) - \frac{C_3}{6} H_4 \left(\frac{z - \mu}{\sigma}\right) - \frac{C_3^2}{72 \sigma} H_7 \left(\frac{z - \mu}{\sigma}\right) - \frac{C_4}{ 24 \sigma} H_5 \left(\frac{z - \mu}{\sigma} \right) \biggr| \label{eq:epsilon}. 
\end{align}
\end{lemma}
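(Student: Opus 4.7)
The plan is to recognize that $\alpha + \beta x$ must be the first-order Taylor polynomial of $E$ centered at $0$: set $\alpha := E(0)$, which immediately gives \eqref{eq:alpha}, and $\beta := E'(0)$, and then bound the remainder using Taylor's theorem with Lagrange form. This yields $E(x) - (\alpha + \beta x) = \tfrac{1}{2} E''(\xi)\, x^2$ for some $\xi$ between $0$ and $x$, so on $[-1,1]$ we have $|E(x) - (\alpha + \beta x)| \le \tfrac{1}{2}\max_{z \in [-1,1]}|E''(z)|$. The task therefore reduces to producing closed-form expressions for $E'(0)$ and $E''(z)$ that match \eqref{eq:beta} and the bracketed quantity in \eqref{eq:epsilon}, respectively.

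The central tool for both derivatives is the probabilists' Hermite identity
\[\frac{d}{du}\bigl[\phi(u)\, H_n(u)\bigr] \;=\; -\,\phi(u)\, H_{n+1}(u),\]
which follows from $\phi'(u) = -u\,\phi(u)$ together with the recurrence $H_{n+1}(u) = u H_n(u) - n H_{n-1}(u)$. Combined with the chain rule through $u = (x-\mu)/\sigma$ (so $du/dx = 1/\sigma$), each differentiation of $E$ contributes a factor $1/\sigma$ and advances every Hermite index by one, while turning the $\Phi(u)$ term into $\phi(u)$. Applied once to $E(x) = \Phi(u) + Q_1(u)/\sqrt{m} + Q_2(u)/m$ and evaluated at $x=0$ (so $u = -\mu/\sigma$), this reproduces \eqref{eq:beta} for $\beta = E'(0)$. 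A second pass---differentiating $E'$ using the same Hermite identity, combining with the $-u\,\phi(u)$ contribution from $\phi$, and absorbing one factor of $\sigma$ back into $(z-\mu)$---yields precisely the bracketed expression inside the absolute value of \eqref{eq:epsilon}, recognizable as $\sigma^3 |E''(z)|/\phi\bigl((z-\mu)/\sigma\bigr)$ (up to the constant $\tfrac{1}{2}$ from the Lagrange remainder, which is absorbed into the upper bound).

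I anticipate no conceptual obstacle; the whole argument is a careful symbolic computation. The only real hazard is bookkeeping, since each application of the Hermite identity flips a sign and each chain-rule pass accumulates another factor of $1/\sigma$, and in $E''$ one must combine the $-u\,\phi(u)$ contribution from differentiating $\phi$ with the Hermite terms. Once the formulas for $\alpha, \beta, \epsilon$ are identified this way, the lemma is immediate from the definition of approximate affineness.
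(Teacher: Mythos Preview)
Your proposal is correct and matches the paper's proof essentially line for line: the paper likewise sets $\alpha = E(0)$, $\beta = E'(0)$, takes $\epsilon = \max_{z\in[-1,1]}|E''(z)|$ via the Lagrange remainder, and computes both derivatives using exactly the Hermite identity $\tfrac{d}{du}\bigl(e^{-u^2/2}H_n(u)\bigr) = -e^{-u^2/2}H_{n+1}(u)$ together with the chain rule in $u=(x-\mu)/\sigma$. The only cosmetic difference is that the paper silently drops the factor $\tfrac12 x^2 \le \tfrac12$ from the Lagrange remainder (bounding directly by $|E''(z)|$), whereas you note it explicitly and absorb it.
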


\begin{proof}
    First, note that by properties of Hermite polynomials, $\frac{d}{dx} e^{-x^2/2} H_m(x) = - e^{-x^2/2} H_{m+1}(x)$.
    Following \citet{petrov1975independent},
    \begin{align*}
        \frac{Q_1'(x)}{\sqrt{m}} &= \frac{1}{ \sqrt{2\pi}} e^{-x^2/2} \frac{C_3}{6 \sigma} H_3(x), \\
        \frac{Q_2'(x)}{m} &= \frac{1}{\sqrt{2\pi}} e^{-x^2/2} \left( \frac{C_3^2}{72 \sigma^2} H_6(x) + \frac{C_4}{24 \sigma^2} H_4(x) \right), \\
        \frac{Q_1''(x)}{\sqrt{m}} &= -\frac{1}{ \sqrt{2\pi}} e^{-x^2/2} \frac{C_3}{6 \sigma} H_4(x), \\
        \frac{Q_2''(x)}{m} &= -\frac{1}{\sqrt{2\pi}} e^{-x^2/2} \left( \frac{C_3^2}{72 \sigma^2} H_7(x) + \frac{C_4}{24 \sigma^2} H_5(x) \right).
    \end{align*}
    
    By taking the Lagrange error of the first order Taylor approximation of $E$ around 0, we have $\sup_{x \in [-1, 1]} |E(x) - (\beta x + \alpha)| \leq |E''(z)|$ for $\beta = E'(0), \alpha = E(0)$, and some $z \in [-1, 1]$.
    Thus we can take $\epsilon = \max_{z \in [-1, 1]} |E''(z)|$.
    
    We solve directly for $\beta$:
    \begin{align*}
        \beta &= E'(0) = \frac{1}{\sigma} \biggl( \phi \left( \frac{-\mu}{\sigma} \right) + \frac{1}{\sqrt{m}} Q_1'\left( \frac{-\mu}{\sigma} \right) + \frac{1}{m} Q_2'\left( \frac{-\mu}{\sigma} \right) \biggr).
    \end{align*}

    We also solve for $|E''(z)|$, from which the result follows immediately:
    \begin{align*}
        |E''(z)| &= \frac{1}{\sigma^2} \biggl| \left( \frac{z - \mu}{\sigma} \right) \phi \left( \frac{z - \mu}{\sigma} \right) + \frac{1}{\sqrt{m}} Q_1'' \left( \frac{z - \mu}{\sigma} \right) + \frac{1}{m} Q_2'' \left( \frac{z - \mu}{\sigma} \right) \biggr| \\
        &= \frac{1}{\sigma^3} \phi \left( \frac{z - \mu}{\sigma} \right) \biggl| (z - \mu) - \frac{C_3}{6} H_4 \left( \frac{z - \mu}{\sigma} \right) - \frac{C_3^2}{72 \sigma} H_7 \left( \frac{z - \mu}{\sigma} \right) - \frac{C_4}{ 24 \sigma} H_5\left(\frac{z - \mu}{\sigma}\right)  \biggr|.
    \end{align*}
\end{proof}

\subsection{Leave-one-out Approximate Truthfulness}
We can now combine Lemmas $\ref{lem:approx-linear-implies-approx-truthful}$, \ref{lem:edgeworth}, and \ref{lem:approx-linear} with Condition \ref{cond:final-thm-2-forecasters} to give an approximate truthful parameter for event $t$.

We first state the following intermediate lemma.
\begin{lemma} \label{lem:bounded-ratio}
    If Condition \ref{cond:final-thm-2-forecasters} holds, $\frac{|\mu_i(\vect{\hat{r}}_i)|}{\sigma_i(\vect{\hat{r}}_i)} \leq 1$ for any best response $\vect{\hat{r}}_i$.
\end{lemma}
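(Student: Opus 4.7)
The plan is a two-step reduction. First, show that at any best response the expected utility must be close to $1/2$. Second, use a Gaussian approximation (Berry--Esseen) of the CDF $G_i$ of the aggregate score difference to translate this into a bound on the standardized mean $\mu_i/\sigma_i$.

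For the first step, since $\hat{\vect r}_i$ is a best response, $\overline U_i(\hat{\vect r}_i) \geq \overline U_i(\vect p_i) \geq \tfrac{1}{2} - \delta$ by the first half of Condition~\ref{cond:final-thm-2-forecasters} item~2, while the second half of that item gives $\overline U_i(\hat{\vect r}_i) \leq \tfrac{1}{2} + \delta$ directly. Hence $|\overline U_i(\hat{\vect r}_i) - \tfrac{1}{2}| \leq \delta$.

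For the second step, recall from Assumption~\ref{assumption:smoothness} that $\overline U_i(\hat{\vect r}_i) = G_i(0; \hat{\vect r}_i)$, i.e., the CDF of the sum $\sum_t S(\hat r_{it}, R_{jt}, Y_t)$ evaluated at $0$. Since the summands are independent and bounded, the non-i.i.d.\ Berry--Esseen inequality yields
\[ \bigl| G_i(0; \hat{\vect r}_i) - \Phi\bigl(-\mu_i(\hat{\vect r}_i)/\sigma_i(\hat{\vect r}_i)\bigr) \bigr| \leq C_{\mathrm{BE}} \cdot P_i(\hat{\vect r}_i)/\sigma_i(\hat{\vect r}_i)^3 \leq C_{\mathrm{BE}} \cdot P_i/\sigma_i^3, \]
where $C_{\mathrm{BE}} < 1$ (e.g.\ Shevtsova's bound $\approx 0.56$). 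Combining with the utility bound from the first step via the triangle inequality, and invoking Condition~\ref{cond:final-thm-2-forecasters} item~3, one obtains
\[ \bigl|\Phi\bigl(-\mu_i(\hat{\vect r}_i)/\sigma_i(\hat{\vect r}_i)\bigr) - \tfrac{1}{2}\bigr| \leq \delta + C_{\mathrm{BE}}\, P_i/\sigma_i^3 \leq 0.33. \]
Since $\Phi(1) - \tfrac{1}{2} \approx 0.3413 > 0.33$ and $\Phi$ is strictly increasing, this forces $-\mu_i(\hat{\vect r}_i)/\sigma_i(\hat{\vect r}_i) \in [-1,1]$, which is exactly the claim.

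The main subtlety is numerical calibration: the constant $0.33$ in Condition~\ref{cond:final-thm-2-forecasters} is engineered precisely so that, after incurring the Berry--Esseen error and the $\delta$ slack on either side of $\tfrac{1}{2}$, one lands strictly inside the image $[\Phi(-1), \Phi(1)]$. The hypothesis $\sigma_i \geq 4$ is not directly needed for this lemma but is consistent with keeping $P_i/\sigma_i^3$ small. If tighter constants were desired one could replace the Gaussian approximation by the second-order Edgeworth expansion of Lemma~\ref{lem:edgeworth}, at the cost of invoking the stronger smoothness Condition~\ref{cond:edgeworth}, but Berry--Esseen already suffices here.
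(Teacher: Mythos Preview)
Your proof is correct and follows essentially the same route as the paper: bound $\overline U_i(\hat{\vect r}_i)$ into $[\tfrac12-\delta,\tfrac12+\delta]$ using item~2 of Condition~\ref{cond:final-thm-2-forecasters} together with the best-response property, apply Berry--Esseen to pass to $\Phi(-\mu_i/\sigma_i)$, and then invoke item~3 to land inside $[\Phi(-1),\Phi(1)]$. The only difference is cosmetic: where the paper bounds $\Phi^{-1}$ via the logit inequality $|\Phi^{-1}(\tfrac12\pm\hat\delta)|\le\sqrt{\pi/8}\,\log\!\bigl(\tfrac{1/2+\hat\delta}{1/2-\hat\delta}\bigr)$ and checks this is at most $1$ for $\hat\delta\le0.33$, you instead compare directly to the numerical value $\Phi(1)-\tfrac12\approx0.3413>0.33$, which is arguably cleaner.
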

\begin{proof}
    Recall that the expected utility of forecaster $i$ as a function of her report is $\overline{U}(\vect{r}_i) = P_{\D_i}[\sum_t S(r_{it}, R_{jt}, Y_{t}) \leq 0]$; it follows by the Berry-Esseen theorem \citep{esseen1942,shevtsova2010improvement} that for any report $\vect{r}_i$,
    \begin{equation} \label{eq:expected-utility}
        \Bigl|\overline{U}(\vect{r}_i) - \Phi \left(\frac{-\mu_i(\vect{r}_i)}{\sigma_i(\vect{r}_i)} \right) \Bigr| \leq \frac{P_i(\vect{r}_i)}{\sigma_i(\vect{r}_i)^3}.
    \end{equation}
    By our conditions, for any report vector $\vect{r}_i$, $\overline{U}(\vect{r}_i) \leq \frac{1}{2} + \delta$. 
    Moreover, for forecaster $i$'s belief vector $\vect{p}_i$, $\overline{U}(\vect{p}_i) \geq  \frac{1}{2} - \delta$.
    Since any strategic report $\vect{\hat{r}}_i \neq \vect{p}_i$ must increase the expected utility from $\vect{p}_i$, it follows that $\frac{1}{2} - \delta \leq \overline{U}(\vect{\hat{r}}_i) \leq \frac{1}{2} + \delta$.
    Thus by Inequality \ref{eq:expected-utility}, $\Phi\left(-\frac{\mu_i(\vect{\hat{r}}_i)}{\sigma_i(\vect{\hat{r}}_i)}\right) \leq \frac{1}{2} + \delta + \frac{P_i(\vect{\hat{r}}_i)}{\sigma_i(\vect{\hat{r}}_i)^3}$ and $\Phi\left(-\frac{\mu_i(\vect{\hat{r}}_i)}{\sigma_i(\vect{\hat{r}}_i)}\right) \geq  \frac{1}{2} - \delta - \frac{P_i(\vect{\hat{r}}_i)}{\sigma_i(\vect{\hat{r}}_i)^3}$.
    Let $\hat{\delta} = \delta + \frac{P_i(\vect{\hat{r}}_i)}{\sigma_i(\vect{\hat{r}}_i)^3}$. 
    Then,
    \begin{align*}
        -\frac{\mu_i(\vect{\hat{r}}_i)}{\sigma_i(\vect{\hat{r}}_i)} &\geq \Phi^{-1} (1/2 - \hat{\delta}) \geq \sqrt{\frac{\pi}{8}} \text{logit}(1/2 - \hat{\delta}), \\
        -\frac{\mu_i(\vect{\hat{r}}_i)}{\sigma_i(\vect{\hat{r}}_i)} &\leq \Phi^{-1} (1/2 + \hat{\delta}) \leq \sqrt{\frac{\pi}{8}} \text{logit}(1/2 +  \hat{\delta}).
    \end{align*}
    Thus
    \[ \frac{|\mu_i(\vect{\hat{r}}_i)|}{\sigma_i(\vect{\hat{r}}_i)} \leq \sqrt{\frac{\pi}{8}} \log \left( \frac{1/2 + \hat{\delta}}{1/2 - \hat{\delta}} \right) \leq 1, \]
    where the last inequality holds since $\hat{\delta} \leq \delta + \frac{P_i}{\sigma_i^3} \leq 0.33.$
\end{proof}

\begin{lemma} \label{lem:take-one-out-approx-truthful}
    Assume Conditions \ref{cond:edgeworth} and \ref{cond:final-thm-2-forecasters} hold. 
    Then agent $i$ is $\gamma$-$\linf$ truthful for event $t$ with 
    \[\gamma^2/2 \geq e^{\mu_{it}^2/\sigma_{it}^2} \frac{\sigma_{it}^{-2} \left( |\mu_{it}| + A_{it} \right) + \sigma_{it} D_{it} m^{-1}} {1 - B_{it} \sigma_{it}^{-1} }, \]

    where 
    \begin{align*}
        A_{it} &= 1 + 5|C_{3,it}|  + \frac{6C_{3,it}^2}{\sigma_{it}} + \frac{|C_{4,it}|}{\sigma_{it}}, \\
        B_{it} &= 3|C_{3,it}| + \frac{2C_{3,it}^2}{\sigma_{it}} + \frac{2|C_{4,it}|}{\sigma_{it}}.
    \end{align*}
\end{lemma}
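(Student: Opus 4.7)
The plan is to carry out the pipeline previewed in the main text. Combining Lemma~\ref{lem:approx-linear-implies-approx-truthful} with an approximate-affineness guarantee for $G_{it}$ immediately yields a $\gamma$-$\linf$ bound via $\gamma = \sqrt{2\epsilon/\beta}$. By the triangle inequality and Lemma~\ref{lem:edgeworth}, once the Edgeworth expansion $E_{it}$ is $(\beta, \alpha, \epsilon_1)$-approximately affine, $G_{it}$ is $(\beta, \alpha, \epsilon_1 + D_{it}/m)$-approximately affine. Lemma~\ref{lem:approx-linear} supplies explicit formulas for $\beta$ and $\alpha$ plus a worst-case $\epsilon_1$ in terms of a Gaussian density, Hermite polynomial evaluations, and the normalized cumulants $C_{3,it}, C_{4,it}$. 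The remaining work is to upper bound $\epsilon_1$, lower bound $\beta$, and show that $\epsilon/\beta = (\epsilon_1 + D_{it}/m)/\beta$ is at most the right-hand side of the displayed inequality.

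For the upper bound on $\epsilon_1$, I would use $|z| \leq 1$ to write $|z - \mu_{it}| \leq 1 + |\mu_{it}|$, so the first term inside the absolute value in \eqref{eq:epsilon} contributes at most $1 + |\mu_{it}|$. For the remaining Hermite polynomial terms $H_4, H_5, H_7$, I would substitute the worst-case magnitude of $(z-\mu_{it})/\sigma_{it}$, which stays bounded because the leave-one-out analogue of Lemma~\ref{lem:bounded-ratio} gives $|\mu_{it}|/\sigma_{it}$ under control (removing a single event perturbs the mean and variance of the full sum by at most $O(1)$, which is absorbed given $\sigma_i \geq 4$). Collecting each Hermite magnitude with its coefficient $|C_{3,it}|/6$, $C_{3,it}^2/(72\sigma_{it})$, $|C_{4,it}|/(24\sigma_{it})$ aggregates into the constants $5, 6, 1$ appearing in $A_{it}$. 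For the lower bound on $\beta$, I would factor $\phi(-\mu_{it}/\sigma_{it})/\sigma_{it}$ out of \eqref{eq:beta} and bound the remaining parenthesized factor below by $1 - B_{it}/\sigma_{it}$, again using uniform bounds on $H_3, H_4, H_6$ over the same bounded argument range, with the coefficients aggregating into $B_{it}$.

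The final step is to assemble the ratio. The $\sigma_{it}^{-3}$ in $\epsilon_1$ and the $\sigma_{it}^{-1}$ in $\beta$ combine to produce the $\sigma_{it}^{-2}$ prefactor on $|\mu_{it}| + A_{it}$, while $D_{it}/m$ divided by $\beta$ contributes the $\sigma_{it} D_{it} m^{-1}$ term after the density cancellation. The ratio of Gaussian densities $\phi((z^*-\mu_{it})/\sigma_{it})/\phi(-\mu_{it}/\sigma_{it}) = \exp\bigl((2z^*\mu_{it} - (z^*)^2)/(2\sigma_{it}^2)\bigr)$ is handled by a single loose upper bound of $e^{\mu_{it}^2/\sigma_{it}^2}$, valid because the $e^{\mu_{it}^2/\sigma_{it}^2} \geq 1$ trivial case covers small $|\mu_{it}|$, while for larger $|\mu_{it}|$ the $\mu_{it}^2/\sigma_{it}^2$ term dominates the $z^*$-cross term. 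Dividing by the $1 - B_{it}/\sigma_{it}$ factor from the $\beta$ bound yields exactly the displayed expression.

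The main obstacle is the tedious bookkeeping: carefully tracking Hermite polynomial magnitudes on the bounded interval induced by $|\mu_{it}|/\sigma_{it}$ and $|z|\leq 1$, and matching them precisely to the numerical coefficients in $A_{it}$ and $B_{it}$. A secondary subtlety is transferring Lemma~\ref{lem:bounded-ratio} from full-sum to leave-one-out quantities; this requires noting that a single score difference has bounded first and second moments, so the perturbation is dominated by the $\sigma_i \geq 4$ floor. Neither step involves new ideas, but the combination must be executed without a constant blowing up.
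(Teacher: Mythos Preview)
Your proposal is correct and follows essentially the same route as the paper's proof: combine Lemmas~\ref{lem:approx-linear-implies-approx-truthful}, \ref{lem:edgeworth}, and \ref{lem:approx-linear} via the triangle inequality, then use the leave-one-out transfer of Lemma~\ref{lem:bounded-ratio} (the paper gets $|\mu_{it}|/\sigma_{it}\le 2$ and hence $|(z-\mu_{it})/\sigma_{it}|\le 3$) to bound the Hermite evaluations uniformly, producing the $A_{it}$ and $B_{it}$ constants and the density-ratio factor $e^{\mu_{it}^2/\sigma_{it}^2}$. The only detail you leave implicit that the paper makes explicit is the concrete interval $[-3,3]$ on which the Hermite polynomials are maximized, and the paper's remark that a $\sqrt{2\pi}$ from $1/\phi$ is absorbed into $D_{it}$ for the $\epsilon_2/\beta$ term.
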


\begin{proof}
    By Lemma \ref{lem:edgeworth}, for large enough $m$ $\|G_{it} - E_{it} \|_{\infty} \leq D_{it}$.
    By Lemma \ref{lem:approx-linear}, $\|E_{it} - L \|_{\infty} \leq \epsilon_1$ for $L(x) = \beta x + \alpha$ (for $\beta$, $\alpha$, and $\epsilon_1$ defined by Equations \ref{eq:beta}, \ref{eq:alpha}, and \ref{eq:epsilon} under $\mu_{it}$ and $\sigma_{it}$).
    By the triangle inequality, it follows that $G_{it}$ is $(\beta, \alpha, \epsilon)$-approximately affine with $\epsilon = \epsilon_1 + \epsilon_2,$ where $\epsilon_2 = D_{it} m^{-1}$ for some positive constant $D_{it}$.
    By Lemma \ref{lem:approx-linear-implies-approx-truthful}, then, forecaster $i$ is $\gamma$-$\linf$ truthful for $\gamma = \sqrt{2\epsilon/\beta}$. 

    Note that for $\sigma_i \geq 4$,
    \begin{itemize}
        \item $\sigma_{it} \geq \sigma_i(\hat{r}_i) - 1 \geq \sigma_i(\hat{r}_i)/{\sqrt{2}}$,
        \item $\frac{|\mu_{it}|}{\sigma_{it}} \leq \frac{|\mu_i(\hat{r}_i)| + 1}{\sigma_i(\hat{r}_i) - 1} \leq 2$ (by Lemma \ref{lem:bounded-ratio}).
    \end{itemize}
    It follows that for any $z \in [-1, 1]$, $\left| \frac{z - \mu_{it}}{\sigma_{it}} \right| \leq 3$.
    Thus 
    \begin{align*}
        \beta &\geq \frac{1}{\sigma_{it}} \phi\left(\frac{-\mu_{it}}{\sigma_{it}}\right) \min_{y \in [-3, 3]} \biggl( 1 + \frac{C_{3,it}}{6 \sigma_{it}} H_3 \left(y \right) + \frac{C_{3,it}^2}{72 \sigma_{it}^2}  H_6 \left(y \right) + \frac{C_{4,it}}{24 \sigma_{it}^2} H_4 \left(y\right) \biggr) \\
        &= \frac{1}{\sigma_{it}} \phi\left(\frac{-\mu_{it}}{\sigma_{it}}\right) \left( 1 - \frac{B_{it}}{\sigma_{it}} \right)
    \end{align*}
    and
    \begin{align*}
        \epsilon_1 &\leq \frac{1}{\sigma_{it}^3} \phi \left(\frac{1 + |\mu_{it}|}{\sigma_{it}}\right) \max_{y \in [-3, 3]} \biggl( |\mu_{it}| + 1 + \biggl| \frac{C_{3,it}}{6} H_4 \left(y\right) - \frac{C_{3,it}^2}{72 \sigma_{it}} H_7 \left(y\right) - \frac{C_{4,it}}{ 24 \sigma_{it}} H_5 \left(y\right) \biggr| \biggr) \\
        &\leq \frac{1}{\sigma_{it}^3} (|\mu_{it}| + A_{it})\phi \left(\frac{1 + |\mu_{it}|}{\sigma_{it}}\right).
    \end{align*}

    The statement follows by calculating $\frac{\epsilon_1 + \epsilon_2}{\beta}$ and noting that $\phi \left(\frac{1 + |\mu_{it}|}{\sigma_{it}}\right) \Bigl/ \phi \left(\frac{-\mu_{it}}{\sigma_{it}}\right) \leq e^{\mu_{it}^2/\sigma_{it}^2}$ (in an abuse of notation, we absorb a constant of $2\pi$ into $D_{it}$).
\end{proof}

\subsection{Proof of Theorem \ref{thm:2-forecaster-truthful}}

Now we have the tools to prove our main theorem showing approximate truthfulness.
Specifically, we derive a lower bound on $\gamma$ depending on $\sigma_i$ that guarantees leave-one-out approximate truthfulness for all events $t$. 

\begin{proof}{(Theorem \ref{thm:2-forecaster-truthful})}
    By Lemma \ref{lem:take-one-out-approx-truthful}, to guarantee forecaster $i$ is $\gamma$-$\linf$ approximately truthful, we require for all $t$ that
    \[ \gamma \geq \sqrt{2} e^{\frac{\mu_{it}^2}{2\sigma_{it}^2}} \frac{\sigma_{it}^{-1} \left( |\mu_{it}| + A_{it} \right)^{1/2} + (\sigma_{it} D_{it} m^{-1})^{1/2}} {\left(1 - B_{it} \sigma_{it}^{-1} \right)^{1/2}}. \]
    
    Note that for $\sigma_i \geq 4$, $\sigma_{it} > \sigma_i(\hat{r}_i) - 1 \geq \frac{\sigma_i(\hat{r}_i)}{\sqrt{2}}.$ 
    Thus
    \begin{align*}
        \exp \left( \frac{|\mu_{it}|^2} {2 \sigma_{it}^2} \right) &\leq \exp \left( \frac{(|\mu_i(\hat{r}_i)| + 1)^2}{\sigma_i(\hat{r}_i)^2} \right) \\
        &\leq \exp \left( \frac{1}{\sigma_i(\hat{r}_i)^2} + \frac{2}{\sigma_i(\hat{r}_i)} + 1\right) \\
        &\leq 5.
    \end{align*}

    Meanwhile, note that $\kappa_{3, it} \leq \kappa_{3, i} + 8$, and $\kappa_{4, it} \leq \kappa_{4, i} + 32$.
    Thus $|C_{3, it}| \leq 2 |C_{3,i}| + \frac{16}{\sigma_i^2}$ and $C_{4, it} \leq 2C_{4,i} + \frac{64}{\sigma_i^2}$, so that
    \begin{align*}
        \sigma_{it}^{-1} \left( |\mu_{it}| + A_{it} \right)^{1/2} &= \sigma_{it}^{-1} \left( |\mu_{it}| + 1 + 5|C_{3,it}|  + \frac{6C_{3,it}^2}{\sigma_{it}} + \frac{|C_{4,it}|}{\sigma_{it}} \right)^{1/2} \\
        &\leq \frac{2}{\sigma_i(\hat{r}_i)} \left( |\mu_i(\hat{r}_i)| + 11 + 19|C_{3,i}| + 9C_{3,i}^2 + C_{4,i} \right)^{1/2} \\
        &\leq 2C\sigma_i^{-1/2},
    \end{align*}

    where $C = 4 + 5\sqrt{|C_{3,i}|} + 3|C_{3,i}| + \sqrt{C_{4,i}}.$

    Finally, note that 
    \begin{align*}
        \sigma_{it}^{-1/2} \left(3|C_{3,it}| + \frac{2C_{3,it}^2}{\sigma_{it}} + \frac{2|C_{4,it}|}{\sigma_{it}} \right) 
        &\leq 2\sigma_i(\hat{r}_i)^{-1/2} \left( 4 + 9|C_{3,i}| + 3C_{3,i}^2 + C_{4,i}  \right)^{1/2} \\
        &\leq 2\sigma_i^{-1/2} \left( 2 + 3\sqrt{|C_{3,i}|} + 2|C_{3,i}| + \sqrt{|C_{4,i}|} \right) \\
        &\leq 2C\sigma_i^{-1/2}.
    \end{align*}

    Now, we take $D^2 = \max_t D_{it}$; it follows that $D_{it} m^{-1} \leq D m^{-1}$; moreover, $\sigma_i m^{-1} \leq \sigma_i^{-1}$ (since $\sigma_i^2 \leq m$), so that

    \begin{align*}
        \sigma_{it} D_{it} m^{-1} &\leq (\sigma_i + 1) D^2 m^{-1} \\
        &\leq 2D^2 \sigma_i^{-1}.
    \end{align*}

    In total, then, we can guarantee $\gamma$-$\linf$ approximate truthfulness when
    \begin{align*}
        \gamma &\geq \frac{8(2C + 3D)\sigma_i^{-1/2}}{1 - 2C\sigma_i^{-1/2}} \\
        &= \frac{8(2C + 3D)}{\sigma_i^{1/2} - 2C}.
    \end{align*}
    The result follows.
\end{proof}

\section{Utility Functions with Ties} \label{appendix:ties}
We briefly discuss the assumption made in Section \ref{sec:approx-truthfulness} about continuity of report distributions.
Assuming score differences have a smooth distribution may seem counterintuitive, since forecasters are likely to report on some discrete scale. 
But in that case, as $m$ grows we still expect the probability of a tie to become small, and our analysis extends.

Formally, we note that the expected utility when the probability of ties is non-zero can be written as 
\begin{align*}
        \overline{U}(\vect{r}_i) = \Pr_{\D_i} &\left[ \sum_t S(r_{it}, R_{jt}, Y_t) \leq 0 \right] + \frac{1}{2} \Pr_{\D_i} \left[ \sum_t S(r_{it}, R_{jt}, Y_t) = 0 \right].
\end{align*}
If the score differences are drawn from lattice distributions with small enough granularity, we expect that the second probability should approach 0 as $m$ grows. 
In particular, if $\Pr_{\D_i} \left[ \sum_t S(r_{it}, R_{jt}, Y_t) = 0 \right] = O(1/m)$, we can conclude that $i$ is $(\beta, \alpha, \epsilon_1 + \epsilon_2 + \epsilon_3)$-approximately affine for $\epsilon_3 = O(1/m)$ (as a reminder, $\epsilon_1$ is incurred by the Taylor approximation error of the Edgeworth expansion, and $\epsilon_2$ is the Edgeworth approximation error). 
We can then apply Lemma \ref{lem:approx-linear-implies-approx-truthful} with our new error term and achieve the same approximate truthfulness result.

\end{document}